%% file: example_paper.tex
\documentclass{article}

\usepackage{microtype}
\usepackage{graphicx}
\usepackage{booktabs} 

\usepackage{subfig}
\usepackage{multirow}
\usepackage{amsmath}
\DeclareMathOperator*{\argmax}{arg\,max}

\usepackage{algorithm}
\usepackage{algpseudocode}  

\usepackage{xcolor}

\usepackage{tcolorbox}

\usepackage[accepted]{icml2025}

\usepackage{amsmath}
\usepackage{amssymb}
\usepackage{mathtools}
\usepackage{amsthm}
\usepackage{booktabs} 
\usepackage{tabularx} 
\usepackage{multirow}
\usepackage{booktabs, adjustbox, makecell}
\usepackage{comment}
\usepackage[capitalize,noabbrev]{cleveref}

\theoremstyle{plain}
\newtheorem{theorem}{Theorem}[section]
\newtheorem{proposition}[theorem]{Proposition}
\newtheorem{lemma}[theorem]{Lemma}
\newtheorem{corollary}[theorem]{Corollary}
\theoremstyle{definition}
\newtheorem{definition}[theorem]{Definition}

\theoremstyle{remark}

\newcommand{\rbnote}[1]{ {\textcolor{blue} { ***Rahul edit: #1}}}
\usepackage[textsize=tiny]{todonotes}

\usepackage{subfig}

\usepackage{titlesec}
\titlespacing*{\section}{0pt}{8pt}{4pt}
\titlespacing*{\subsection}{0pt}{6pt}{3pt}

\usepackage{enumitem}
\setlist{itemsep=2pt, parsep=2pt, topsep=4pt}

\icmltitlerunning{VOILA: Value-of-Information Guided Fidelity Selection for Cost-Aware Multimodal Question Answering}

\begin{document}

\twocolumn[
\icmltitle{VOILA: Value-of-Information Guided Fidelity Selection for Cost-Aware Multimodal Question Answering}



\icmlsetsymbol{equal}{*}


\begin{icmlauthorlist}
\icmlauthor{Rahul Atul Bhope}{uci}
\icmlauthor{K. R. Jayaram}{ibm}
\icmlauthor{Vinod Muthusamy}{ibm}
\icmlauthor{Ritesh Kumar}{ibm}
\icmlauthor{Vatche Isahagian}{ibm}
\icmlauthor{Nalini Venkatasubramanian}{uci}

\end{icmlauthorlist}

\icmlaffiliation{uci}{UC Irvine, ICS}
\icmlaffiliation{ibm}{IBM Research AI}

\icmlcorrespondingauthor{Rahul Atul Bhope}{rbhope@uci.edu}

\icmlkeywords{Machine Learning, ICML}

\vskip 0.3in
]




\renewcommand{\thefootnote}{\fnsymbol{footnote}}
\footnotetext[1]{UC Irvine, ICS}
\footnotetext[2]{IBM Research AI}
\renewcommand{\thefootnote}{\arabic{footnote}}

\input{abstract}

\input{introduction}

\input{insights}

\input{new_methodology}

\input{evaluation}

\input{results}

\input{related_works}

\input{conclusion}





\bibliography{example_paper}
\bibliographystyle{icml2025}

\newpage
\appendix
\onecolumn

\appendix

\section*{Appendices}
\addcontentsline{toc}{section}{Appendices}

\renewcommand{\thesubsection}{\Alph{section}.\arabic{subsection}}

\begin{center}
\textbf{\Large Table of Contents}
\end{center}

\vspace{1em}

\noindent\textbf{Appendix~\ref{sec:complete_results}: Complete Results} \dotfill \pageref{sec:complete_results}
\begin{itemize}[leftmargin=2em, label={}]
    \item \ref{sec:complete_tables} Complete Accuracy and Cost Tables \dotfill \pageref{sec:complete_tables}
    \item \ref{sec:fidelity_distributions} Fidelity Selection Distributions \dotfill \pageref{sec:fidelity_distributions}
\end{itemize}

\vspace{0.5em}

\noindent\textbf{Appendix~\ref{sec:datasets}: Dataset Details} \dotfill \pageref{sec:datasets}
\begin{itemize}[leftmargin=2em, label={}]
    \item \ref{sec:vqa_datasets} VQA-v2, GQA, and TextVQA \dotfill \pageref{sec:vqa_datasets}
    \item \ref{sec:floodnet} FloodNet \dotfill \pageref{sec:floodnet}
    \item \ref{sec:locomo} LoCoMo Question Generation \dotfill \pageref{sec:locomo}
\end{itemize}

\vspace{0.5em}

\noindent\textbf{Appendix \ref{sec:theory}: Theoretical Foundations} \dotfill \pageref{sec:theory}
\begin{itemize}[leftmargin=2em, label={}]
    \item \ref{sec:bayes_optimal} Bayes-Optimal Fidelity Selector \dotfill \pageref{sec:bayes_optimal}
    \item \ref{sec:calibration} Learning Utility via Calibration \dotfill \pageref{sec:calibration}
    \item \ref{sec:optimality} Optimality Under Perfect Calibration \dotfill \pageref{sec:optimality}
    \item \ref{sec:regret} Regret with Imperfect Calibration \dotfill \pageref{sec:regret}
    \item \ref{sec:generalization} Generalization to Unseen Questions \dotfill \pageref{sec:generalization}
    \item \ref{sec:greedy_policy} Sequential Greedy Policy \dotfill \pageref{sec:greedy_policy}
    \item \ref{sec:implications} Implications for Adaptive Fidelity Selection \dotfill \pageref{sec:implications}
    \item \ref{sec:theory_empirics} Connection to Empirical Observations \dotfill \pageref{sec:theory_empirics}
\end{itemize}

\vspace{0.5em}

\noindent\textbf{Appendix D: Cost Model Details} \dotfill \pageref{sec:cost_model}
\begin{itemize}[leftmargin=2em, label={}]
    \item \ref{sec:fidelity_set} Fidelity Set \dotfill \pageref{sec:fidelity_set}
    \item \ref{sec:bandwidth} Empirical Bandwidth Proxy \dotfill \pageref{sec:bandwidth}
    \item \ref{sec:tier_cost} Tier-Aware Acquisition Cost \dotfill \pageref{sec:tier_cost}
    \item \ref{sec:normalization} Normalization \dotfill \pageref{sec:normalization}
    \item \ref{sec:deployment} Deployment Profiles \dotfill \pageref{sec:deployment}
    \item \ref{sec:measurement} Empirical Measurement Procedure \dotfill \pageref{sec:measurement}
    \item \ref{sec:sensitivity} Sensitivity Analysis \dotfill \pageref{sec:sensitivity}
\end{itemize}

\vspace{0.5em}

\noindent\textbf{Appendix E: Experimental Details} \dotfill \pageref{sec:experimental}
\begin{itemize}[leftmargin=2em, label={}]
    \item \ref{sec:features} Question Feature Extraction \dotfill \pageref{sec:features}
    \item \ref{sec:hyperparams} Calibration Model and Hyperparameters \dotfill \pageref{sec:hyperparams}
    \item \ref{sec:cross_validation} Cross-Validation Protocol \dotfill \pageref{sec:cross_validation}
    \item \ref{sec:voi_policy} VOI Policy and Threshold Selection \dotfill \pageref{sec:voi_policy}
    \item \ref{sec:hyperparam_search} Hyperparameter Search \dotfill \pageref{sec:hyperparam_search}
    \item \ref{sec:variance} Variance and Reporting \dotfill \pageref{sec:variance}
    \item \ref{sec:computational_overhead} Computational Overhead \dotfill \pageref{sec:computational_overhead}
\end{itemize}

\vspace{0.5em}

\noindent\textbf{Appendix F: Extended Ablation Studies} \dotfill \pageref{sec:ablations}
\begin{itemize}[leftmargin=2em, label={}]
    \item \ref{sec:ablation_design} Experimental Design \dotfill \pageref{sec:ablation_design}
    \item \ref{sec:ablation_results} Comprehensive Results \dotfill \pageref{sec:ablation_results}
    \item \ref{sec:voi_dominance} Why VOI Routing Dominates \dotfill \pageref{sec:voi_dominance}
    \item \ref{sec:predictor_robustness} Robustness to Predictor Architecture \dotfill \pageref{sec:predictor_robustness}
    \item \ref{sec:takeaways} Key Takeaways and Design Implications \dotfill \pageref{sec:takeaways}
\end{itemize}

\vspace{2em}

\clearpage

\input{complete_results}

\input{datasets}

\input{theory}

\input{cost_model}

\input{reproducibility}

\input{ablations_appendix}


\end{document}

%% file: abstract.tex
\begin{abstract}

Despite significant costs from retrieving and processing high-fidelity visual inputs, most multimodal vision-language systems operate at fixed fidelity levels. We introduce VOILA, a framework for Value-Of-Information-driven adaptive fidelity selection in Visual Question Answering (VQA) that optimizes \emph{what information to retrieve} before model execution. Given a query, VOILA uses a two-stage pipeline: a gradient-boosted regressor estimates correctness likelihood at each fidelity from question features alone, then an isotonic calibrator refines these probabilities for reliable decision-making. The system selects the minimum-cost fidelity maximizing expected utility given predicted accuracy and retrieval costs. We evaluate VOILA across three deployment scenarios using five datasets (VQA-v2, GQA, TextVQA, LoCoMo, FloodNet) and six Vision-Language Models (VLMs) with 7B--235B parameters. VOILA consistently achieves 50--60\% cost reductions while retaining 90--95\% of full-resolution accuracy across diverse query types and model architectures, demonstrating that pre-retrieval fidelity selection is vital to optimize multimodal inference under resource constraints.

\end{abstract}

%% file: introduction.tex
\section{Introduction}~\label{sec:intro}
Modern multimodal vision-language systems achieve remarkable accuracy on visual reasoning tasks by scaling models and training data~\cite{alayrac2022flamingo, li2023blip2, liu2024visual}, incurring substantially higher storage, bandwidth, and computational overheads. Recent work has addressed this by focusing on \emph{model selection} and \emph{adaptive routing} -- dynamically choosing between lightweight and heavyweight models based on query characteristics~\cite{chen2023frugalgpt, ding2024hybrid, wang2024routellm}. While reducing \emph{inference costs}, these approaches implicitly assume that input context is fixed and freely available.

In practice, this assumption rarely holds. Multimodal inputs often reside across heterogeneous storage tiers with varying retrieval latencies and costs. Mobile services like Google Photos and iCloud Photos store images at multiple resolutions: on-device caches contain compressed thumbnails and captions, while full-resolution originals reside in cloud storage~\cite{lonn2019smartphonepictureorganizationhierarchical, toderici2016variablerateimagecompression}. Similarly, enterprise systems use hot storage (e.g., AWS S3 Standard) for frequently accessed data and cold storage (e.g., AWS Glacier) for archival content, with retrieval latencies measured in minutes to hours~\cite{aws_s3_storage_classes}. Critically, these retrieval costs are incurred \emph{before} any model processes the input, making post-hoc model routing insufficient for end-to-end efficiency.


\begin{figure}[tb]
    \centering
    \includegraphics[width=0.9\linewidth]{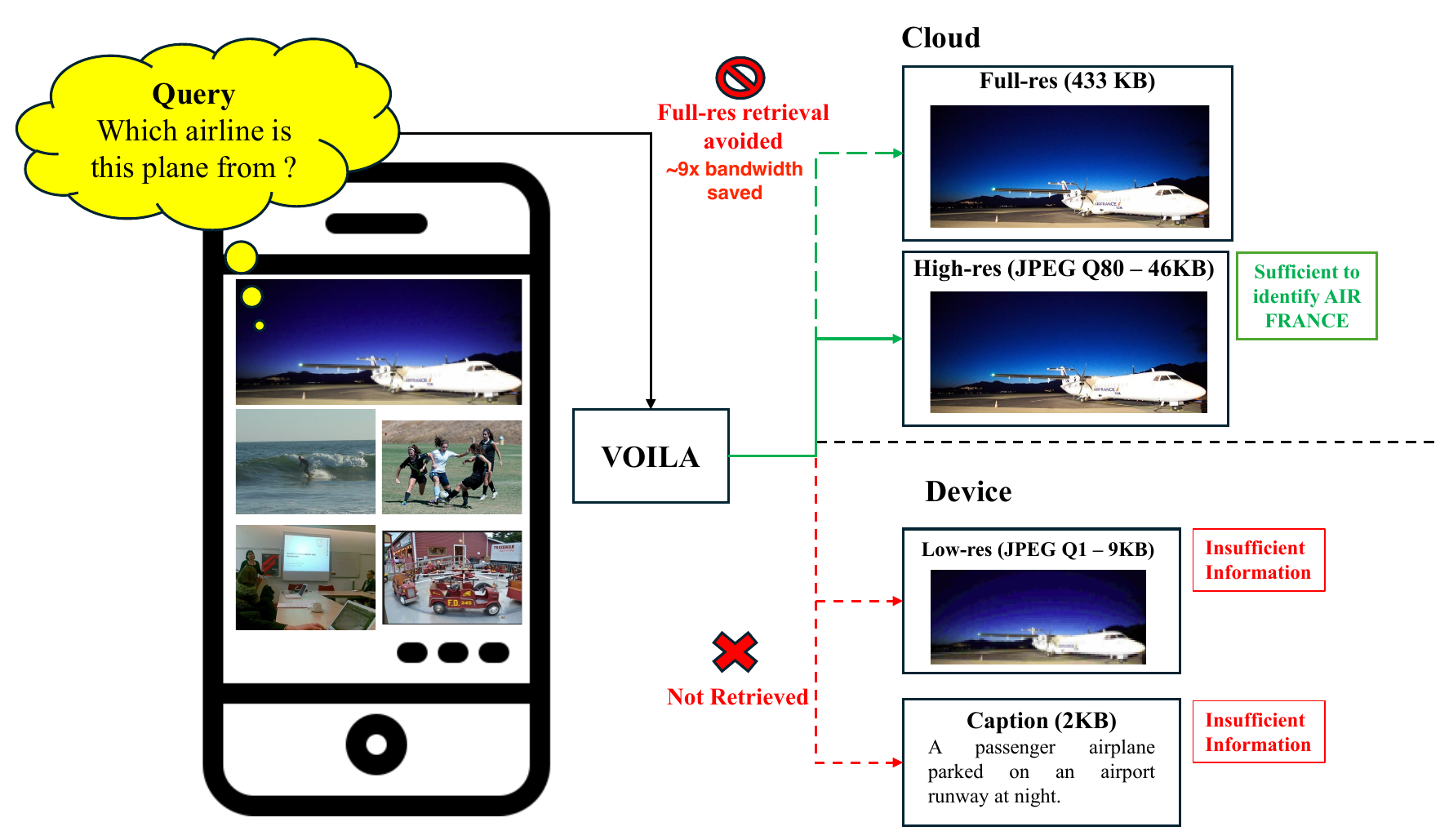}
    \caption{Given a user query, VOILA predicts the minimum visual fidelity required \emph{before retrieval}}.
    \label{fig:obs1}
\end{figure}

\begin{figure*}[h]
    \centering
    \subfloat[Query-dependent accuracy across input fidelities.\label{fig:obs1_dist}]{%
        \includegraphics[width=0.32\linewidth]{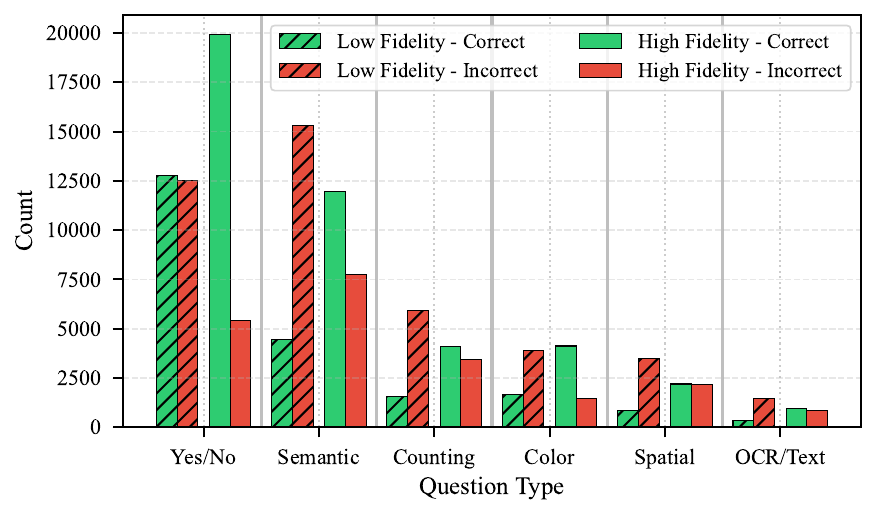}
    }
    \hfill
    \subfloat[Model confidence does not reliably indicate context sufficiency.\label{fig:obs2_confidence}]{%
        \includegraphics[width=0.32\linewidth]{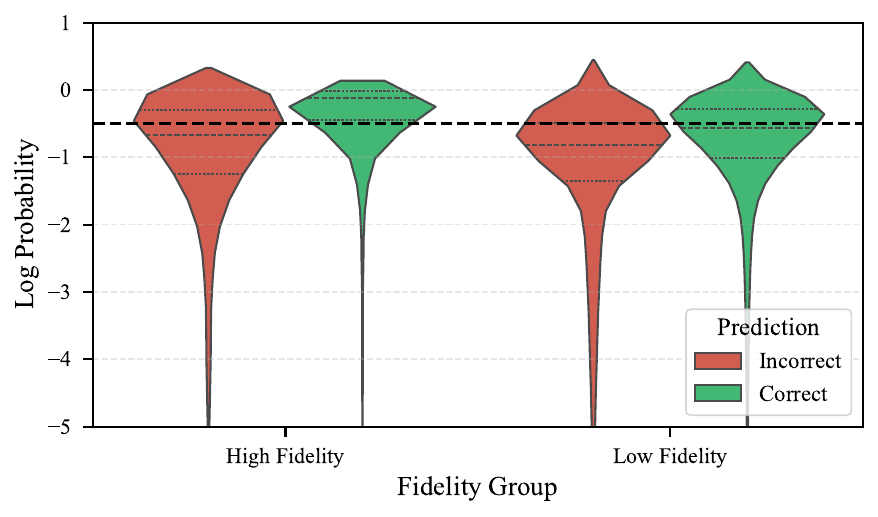}
    }
    \hfill
    \subfloat[Effect of model capacity under fixed input fidelity.\label{fig:model_scaling_fidelity}]{%
        \includegraphics[width=0.32\linewidth]{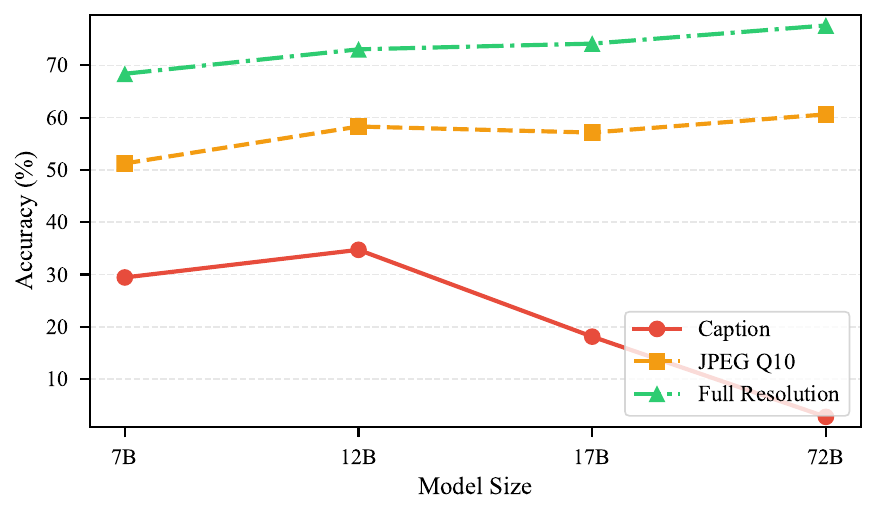}
    }
    \caption{Empirical evidence with Pixtral-12B on VQA-v2 motivating VOILA's design}~\label{fig:observations_combined}
\end{figure*}


We argue that \textbf{context selection at the pre-retrieval stage is a distinct and fundamental learning problem}, orthogonal to model routing. Before asking \emph{which model should answer this query?}, we ask: \emph{Which representation of the available context should be retrieved at all}? Figure~\ref{fig:obs1} shows that for ``Which airline is this plane from?'', a VLM cannot answer from a caption (2 KB) but identifies ``Air France'' from a compressed JPEG (46 KB), avoiding full-resolution retrieval (433 KB)—a 9$\times$ reduction. \textbf{Different queries need different fidelities}, predictable before retrieval. 


We formalize this setting as a \emph{cost-sensitive information acquisition problem} in \textbf{VOILA} (\underline{V}alue-\underline{O}f-\underline{I}nformation guided \underline{L}atency-\underline{A}ware fidelity selection), a principled framework for adaptive fidelity selection in Visual Question Answering (VQA). Given a query, VOILA employs a two-stage pipeline: (i) a gradient-boosted regressor (GBR) estimates, from question-only features, the probability that a given fidelity yields a correct answer and (ii) an isotonic regression then calibrates these estimates to produce reliable correctness probabilities. Using these calibrated predictions, VOILA selects the fidelity maximizing \emph{expected utility}---anticipated accuracy improvement minus acquisition cost---yielding a one-step value-of-information decision rule whose utility regret is bounded by calibration error as established in Appendix~\ref{sec:theory}.


Unlike adaptive inference and model routing (\S\ref{sec:related}), which optimize \emph{how to compute}, VOILA optimizes \emph{what information to acquire}. Pre-retrieval fidelity optimization is critical in three deployment scenarios: 

\begin{itemize}[nosep]
    \item \textbf{Edge-Cloud VQA Systems,} as in the example above (Figure~\ref{fig:obs1}). Network latency and bandwidth costs dominate total query latency.
    
    
    \item \textbf{Agentic Memory Systems.} Long-context conversational agents~\cite{wang2024locomo, park2023generative, wu2025longmemevalbenchmarkingchatassistants} store episodic memories across storage tiers: recent interactions in hot storage, high fidelity archival memories in cold storage, with high retrieval latencies.
    \item \textbf{Mission-Critical Cyber-Physical Systems.} In disaster response scenarios~\cite{rahnemoonfar2021floodnet, fangqidrone2023, fangqidrone2022, fangqidrone2021}, autonomous drones must decide which image fidelities to transmit over bandwidth-constrained networks, balancing latency against decision accuracy.
\end{itemize}

To our knowledge, VOILA is the first to address pre-retrieval fidelity selection for multimodal AI through value-of-information optimization. This paper makes the following technical contributions:

\begin{itemize}[nosep]
    \item \textbf{Empirical characterization:} We characterize systematic patterns in how query characteristics determine minimum information requirements, and show that existing heuristics (confidence-based escalation, model scaling) fail to address these patterns (\S\ref{sec:background}).

    \item \textbf{Value-of-information framework:} We introduce VOILA, a lightweight two-stage system combining gradient-boosted prediction with isotonic calibration to enable query-adaptive fidelity selection with bounded utility regret (\S\ref{sec:method}).

    \item \textbf{Problem formulation:} We formalize \textbf{pre-retrieval context selection} as a cost-sensitive information acquisition problem (\S\ref{sec:method}) and provide a theoretical analysis showing that VOILA implements a near Bayes-optimal policy when probability estimates are calibrated, with regret bounded by calibration quality (Appendix~\ref{sec:theory}).
    
    \item \textbf{Comprehensive evaluation} using \textbf{six VLMs with 7B -- 235B} parameters and \textbf{five datasets} from the three deployment scenarios mentioned above demonstrates \textbf{50--60\% cost reductions while retaining 90--95\% of full-resolution accuracy}, establishing pre-retrieval selection as a robust learning primitive (\S\ref{sec:experiments} and \S\ref{sec:results}, with extended ablations in Appendix~\ref{sec:ablations}).

\end{itemize}

%% file: insights.tex
\section{Why Pre-Retrieval Selection Matters?}~\label{sec:background}~\label{sec:emp_insights}
To motivate VOILA's design, we present three empirical observations, based on VQA-v2 dataset on Pixtral-12B VLM, examining variations in model behavior across input fidelities (text-only captions to full-resolution images).

\textbf{Observation 1: Information Requirements Vary Systematically Across Queries.} 
Figure~\ref{fig:obs1} in the previous section showed that different questions over the same image require different minimum input fidelities. Figure~\ref{fig:obs1_dist} quantifies this at scale: low-fidelity representations (captions, thumbnails) succeed on semantically coarse queries (yes/no, some object identification) but fail disproportionately on counting, color, spatial, and OCR tasks requiring fine-grained visual evidence.
In contrast, high-fidelity inputs substantially improve accuracy on these visually demanding categories while offering limited gains on queries already solvable at low fidelity. No single fidelity dominates across query types, motivating query-adaptive fidelity selection.

\textbf{Observation 2: Model Confidence Is an Unreliable Signal for Context Sufficiency.} 
Common adaptive systems retrieve low-cost context first and escalate to higher-fidelity inputs only when the model exhibits low confidence (token-level log-probabilities). However, model confidence is frequently misaligned with correctness when retrieved context lacks critical visual information. Figure~\ref{fig:obs2_confidence} shows that incorrect answers from low-fidelity inputs often receive log-probability scores comparable to correct answers, exceeding confidence thresholds that would prevent escalation and causing confident but incorrect predictions without accessing higher-fidelity representations.  Post-retrieval confidence is thus an unreliable indicator of context sufficiency, motivating pre-retrieval decision mechanisms.



\textbf{Observation 3: Model Scaling Does Not Compensate for Insufficient Information.}
Often multimodal query failures are addressed by  routing to larger VLMs. But, increasing model capacity does not resolve errors from insufficient input representations. Figure~\ref{fig:model_scaling_fidelity} shows that under fixed input fidelity, scaling yields limited and non-monotonic gains: caption-only inputs exhibit early saturation and degradation at larger sizes, JPEG Q10 shows marginal improvements, while full-resolution inputs consistently benefit. This indicates that model failures stem from missing visual evidence rather than inadequate reasoning, suggesting information sufficiency is a prerequisite for effective scaling, and that adaptive systems should prioritize decisions about what information to retrieve before selecting which model to apply.

%% file: new_methodology.tex
\section{VOILA: Overview}~\label{sec:method}~\label{sec:problem_setup}

\subsection{Problem Formulation} Let $(q, x)$ denote a visual question $q$ paired with an image $x$.
We assume access to a discrete, ordered set of input fidelities $\mathcal{F} = \{f_1, f_2, \dots, f_K\}$,
sorted by increasing acquisition cost, where each fidelity $f \in \mathcal{F}$ incurs a cost $c(f) > 0$ capturing retrieval latency, bandwidth, and compute overhead.
Lower fidelities correspond to inexpensive representations (e.g., captions or thumbnails), while higher fidelities provide richer visual information (e.g., full-resolution images).


For fidelity $f$, if the vision--language model (VLM) produces an answer $\hat{a}_f$, our objective is to select the fidelity that maximizes expected utility:
\[
f^\star
=
\arg\max_{f \in \mathcal{F}}
\underbrace{\Pr(\hat{a}_f \text{ is correct} \mid q)}_{\text{expected accuracy}} - \underbrace{\lambda c(f)}_{\text{acquisition cost}},
\]
where \textbf{the probability of correctness must be estimated \emph{without} retrieving fidelity $f$ or running the VLM at that fidelity}, and $\lambda$ controls the accuracy-cost trade-off. VOILA accomplishes this through question-conditioned learned estimators.


\subsection{Design Overview: A Three-Stage Pipeline}

VOILA adopts a three-stage pipeline that cleanly separates \emph{ranking} from \emph{probability estimation}, and operates independently for each fidelity.
: 

\begin{enumerate}[nosep]
    \item \textbf{Question-Conditioned Success Scoring} (\S\ref{sec:success_scoring}): Learn to predict, from question features alone, how likely each fidelity is to yield a correct answer.
    
    \item \textbf{Fidelity-Specific Probability Calibration} (\S\ref{sec:calibration}): Convert raw prediction scores into reliable probability estimates suitable for cost-aware decision-making.
    
    \item \textbf{Value-of-Information Selection} (\S\ref{sec:voi}): Use calibrated probabilities to select the fidelity that maximizes expected utility by balancing predicted accuracy gains against acquisition costs.
\end{enumerate}

\subsection{Question-Conditioned Success Scoring}~\label{sec:success_scoring}

Fig.~\ref{fig:observations_combined} illustrates that question features contain predictive signal about cross-fidelity performance \emph{before} any visual information is retrieved. To exploit this signal, we construct training data by executing the VLM at each fidelity $f$ on labeled examples and recording whether the produced answer is correct. Let $y_f \in \{0,1\}$ denote the correctness label for fidelity $f$. We extract a question feature representation $\phi(q)$ consisting of:
\begin{itemize}[nosep]
    \item TF--IDF features of the question text,
    \item lexical statistics (e.g., length, numeric indicators),
    \item coarse question-type (e.g., counting, color, spatial).
\end{itemize}


Using these features, we train a fidelity-specific regressor
\[
r_f(q) = g_f(\phi(q))
\]
to produce a raw score that correlates with the likelihood that the VLM will answer correctly at fidelity $f$. \textbf{Crucially, $g_f$ is a learned estimator of VLM behavior, not the VLM itself.} The regressor learns statistical patterns between question characteristics and cross-fidelity success rates during offline training. At inference time, \textbf{computing $r_f(q)$ requires only a forward pass through a lightweight tree ensemble---no visual retrieval, no VLM execution, no multimodal processing}. This is orders of magnitude faster and cheaper than actually running the VLM at each fidelity.

\begin{algorithm}[tb]
\caption{VOILA Inference Procedure}
\label{alg:voila}
\small
\begin{algorithmic}[1]
\State \textbf{Input:} Query $q$, fidelity set $\mathcal{F} = \{f_1, \ldots, f_K\}$, trade-off $\lambda$
\State \textbf{Output:} Selected fidelity $f^*$ and answer $\hat{a}_{f^*}$
\State \textcolor{blue}{// Phase 1: Fidelity selection (question-only, no VLM)}
\State Initialize $f \gets f_1$ \Comment{Start with lowest-cost fidelity}
\State Compute $\hat{p}_f(q)$ using calibrated predictor $\psi_f(r_f(q))$ 
\For{$f' \in \{f_2, \ldots, f_K\}$ in ascending cost order}
    \State Compute $\hat{p}_{f'}(q)$ using calibrated predictor $\psi_{f'}(r_{f'}(q))$ 
    \State $\mathrm{VOI} \gets [\hat{p}_{f'}(q) - \hat{p}_f(q)] - \lambda c(f')$
    \If{$\mathrm{VOI} > 0$}
        \State $f \gets f'$ \Comment{Escalate to higher fidelity}
        \State $\hat{p}_f(q) \gets \hat{p}_{f'}(q)$
    \Else
        \State \textbf{break} \Comment{No further escalation justified}
    \EndIf
\EndFor
\State \textcolor{blue}{// Phase 2: Execute VLM once at selected fidelity}
\State Retrieve visual representation at fidelity $f$ 
\State Execute VLM to produce answer $\hat{a}_f$ 
\State \Return $f, \hat{a}_f$
\end{algorithmic}
\end{algorithm}

We implement $g_f$ as a Gradient Boosting Regressor~\cite{prettenhofer2014gradient} for three practical reasons:
\begin{enumerate}[nosep]
    \item \textbf{Sparse, mixed features}: TF-IDF~\cite{salton1988term} embeddings combined with categorical indicators yield high-dimensional, sparse input spaces where tree-based methods excel.
    \item \textbf{Non-linear interactions}: GBR naturally captures complex dependencies (e.g., ``counting + small objects'' $\Rightarrow$ high-fidelity required) without manual feature engineering.
    \item \textbf{Robust default}: The ablation studies (\S\ref{sec:results}) confirm that while the specific regressor architecture matters less than calibration quality, GBR provides consistently strong performance across datasets.
\end{enumerate}

Importantly, the framework is agnostic to the specific regression model---any regressor that produces monotonic scores can be substituted.

\subsection{Fidelity-Specific Probability Calibration}
\label{sec:voila_calibration}

The regressor output $r_f(q)$ is \emph{not} a calibrated probability; rather, it is a \emph{question-conditioned success score} that orders queries by their relative likelihood of being answered correctly by the vision--language model at fidelity $f$.
Because $r_f(q)$ is not probabilistic, we apply fidelity-specific post-hoc calibration using isotonic regression~\cite{nuesch1991order}.
For each fidelity $f$, we fit an isotonic regression model
\[
\psi_f: r_f(q) \mapsto \hat{p}_f(q),
\]
on a held-out calibration split, where $\hat{p}_f(q) \in [0,1]$ estimates the probability that the vision--language model would answer question $q$ correctly if executed at fidelity $f$. By fitting a non-decreasing mapping from raw scores to empirical success frequencies on held-out data, isotonic regression enforces monotonicity by construction and corrects systematic over- and under-confidence in the raw scores.
This calibration step uses only $(r_f(q), y_f)$ pairs, where $y_f \in \{0,1\}$ denotes historical correctness at fidelity $f$. 
Isotonic regression offers three critical properties for our setting:
\begin{enumerate}
    \item \textbf{Monotonicity preservation}: Ensures that higher raw scores always map to higher probabilities, maintaining the rank ordering learned by the regressor.
    \item \textbf{Non-parametric}: Makes no assumptions about the functional form of miscalibration, allowing it to correct arbitrary systematic biases.
    \item \textbf{Proven calibration guarantees}: Isotonic regression is a standard tool in probability calibration with well-understood convergence properties (formal convergence guarantees in Appendix~\ref{sec:calibration}).
\end{enumerate}


\textbf{The key effect of the GBR + isotonic pipeline is \emph{probability separation}.}
Figure~\ref{fig:cali:scatter} contrasts raw confidence scores produced by the VLM (left) with the question-conditioned probabilities predicted by the GBR + isotonic pipeline (right).
The VLM’s own probabilities exhibit substantial overlap between correct and incorrect predictions, clustering in a narrow band and providing little discriminative signal when visual information is insufficient.
In contrast, the calibrated VOILA predictions are well separated and monotonic: queries that can be answered reliably  concentrate near high $\hat{p}_f(q)$ values, while queries that require additional visual detail are assigned lower probabilities.
This separation enables reliable value-of-information decisions, allowing VOILA to escalate to higher fidelities only when the expected accuracy gain justifies the additional acquisition cost.

\begin{figure}[h]
    \centering
    \includegraphics[width=\linewidth]{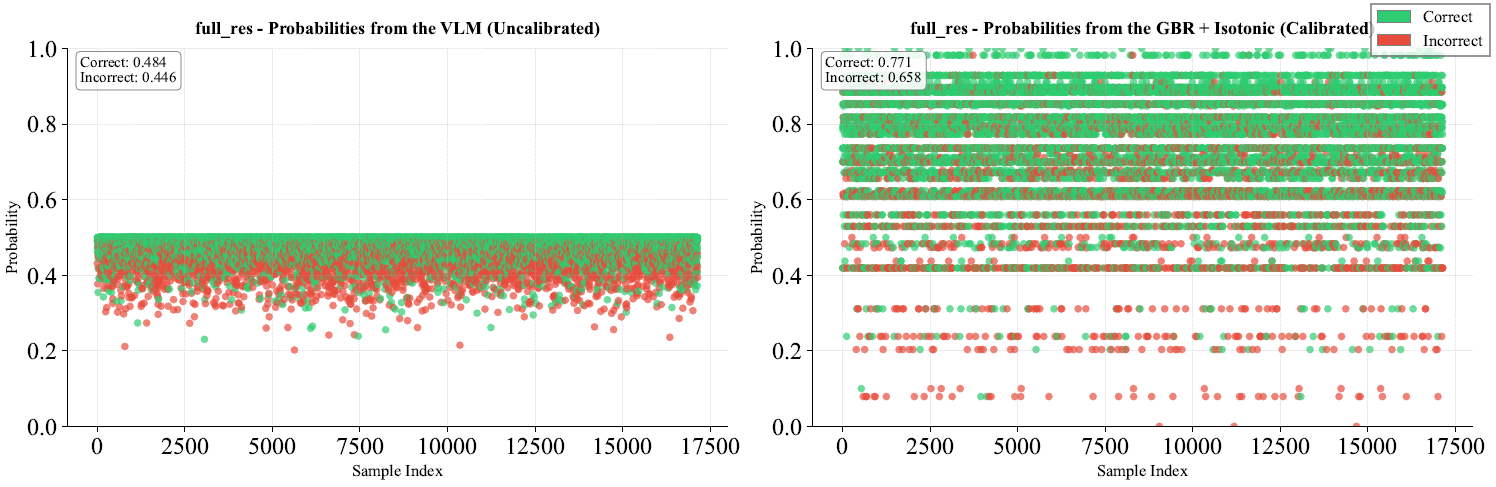}
    \caption{Calibrated vs. uncalibrated probabilities}

    \label{fig:cali:scatter}
\end{figure}

\subsection{Value-of-Information--Based Fidelity Selection}
\label{sec:voi}

With calibrated success probabilities $\hat{p}_f(q)$ in hand---\textbf{computed from question features alone, without any VLM execution}---we can now reason explicitly about the \emph{marginal value} of acquiring higher-cost fidelities. The key insight is that we should escalate to a more expensive representation only when the expected improvement in accuracy justifies the additional acquisition cost.

Starting from the lowest-cost fidelity $f_1$, VOILA evaluates whether escalating to the next higher fidelity $f_j$ is beneficial by computing the \textbf{Value of Information (VOI)}:
\[
\mathrm{VOI}(f_j \mid f_i)
=
\underbrace{[\hat{p}_{f_j}(q) - \hat{p}_{f_i}(q)]}_{\text{expected accuracy gain}} - \underbrace{\lambda c(f_j)}_{\text{acquisition cost}}.
\]

\textsc{VOILA} incrementally increases fidelity only when $\mathrm{VOI}(f_j \mid f_i) > \tau$, i.e., when the expected improvement in correctness outweighs the additional acquisition cost, where $\tau$ is a decision threshold determined during training and $\lambda$ denotes the cost penalty associated with escalation to a higher fidelity.
While a fully sequential decision process could plan over future escalations, our greedy policy offers three practical advantages:
\begin{enumerate}[nosep]
    \item \textbf{Computational efficiency}: Decisions require only a single forward pass through the calibrated predictors, avoiding expensive tree searchs.
    \item \textbf{Near-optimality}: Under accurate probability calibration, the greedy policy achieves bounded regret relative to the optimal sequential strategy (Appendix~~\ref{sec:regret}).
    \item \textbf{Interpretability}: The VOI criterion yields an explicit, human-interpretable decision rule that system operators can reason about and tune via $\lambda$ and $\tau$.
\end{enumerate}

Algorithm~\ref{alg:voila} summarizes the complete inference procedure. Crucially, VOILA is \textbf{model-agnostic} and operates entirely at inference time, requiring no modifications to the underlying VLM. The only offline cost is training the lightweight GBR predictors, isotonic calibrators and tuning $\lambda$, $\tau$, which takes minutes on standard hardware and generalizes across datasets (\S\ref{sec:results}). At inference time, the entire fidelity selection process operates on question text alone without ever touching visual data or executing expensive VLM inference until the optimal fidelity is already determined.

%% file: evaluation.tex
\section{Evaluation}~\label{sec:experiments}
Our evaluation aims to address three questions: (i) does \textsc{VOILA} consistently improve accuracy-cost trade-offs versus fixed-fidelity baselines? (ii) do gains persist across model scales and datasets? and (iii) which components are performance-critical?


\textbf{Models}: We evaluate six state-of-the-art vision--language models -- \emph{Pixtral-12B}~\cite{pixtral2024}, \emph{LLaMA-4-Maverick (17B$\times$12E)}~\cite{abdullah2025evolutionmetasllamamodels}, \emph{LLaVA-1.5-7B}~\cite{zhou2024tinyllava}, \emph{Qwen2-VL-72B}, \emph{Qwen2.5-VL-7B}, and \emph{Qwen-3-VL-235B-A22B}~\cite{bai2023qwen}. These models range from lightweight 7B parameter models to a 235B-parameter mixture-of-experts model, enabling us to study whether fidelity selection remains beneficial under aggressive model scaling. All models are used in a zero-shot or instruction-following setting.

\textbf{Datasets.}
We conduct experiments on five datasets spanning complementary query distributions and deployment regimes.
\textbf{VQA-v2}~\cite{goyal2017vqav2} and \textbf{GQA}~\cite{hudson2019gqa} serve as standard in-domain VQA benchmarks with diverse compositional and relational question types.
\textbf{TextVQA}~\cite{singh2019textvqa} emphasizes OCR-dependent and text-centric reasoning.
\textbf{FloodNet}~\cite{rahnemoonfar2021floodnet} models a mission-critical, bandwidth-constrained cyber--physical setting involving aerial imagery for disaster assessment.
Finally, \textbf{LoCoMo}~\cite{wang2024locomo} evaluates long-horizon agentic memory retrieval: starting from conversational memory items, we generate VQA-style question--answer pairs grounded in associated evidence text using GPT-4o.
Together, these datasets allow us to evaluate \textsc{VOILA} across varying degrees of visual complexity, linguistic structure, cost asymmetry, and out-of-distribution deployment constraints. Detailed descriptions of dataset preprocessing and question generation for 
LoCoMo are provided in 
Appendix~\ref{sec:datasets}.

\textbf{Input Fidelities.}
For each image, we construct a discrete, ordered set of input fidelities
\(
\mathcal{F}=\{\texttt{caption},\ \texttt{resize}_{32\times32},\ \texttt{jpeg\_q1},\ \texttt{jpeg\_q10},\ \texttt{full}\}
\),
chosen to reflect representations that are \emph{already materialized} in real-world multimodal systems rather than artificially synthesized intermediates.
Captions and low-resolution thumbnails are commonly cached on-device or in hot storage, while compressed JPEGs and full-resolution images reside in progressively slower and more expensive storage or transmission tiers.
These fidelities therefore correspond to natural breakpoints in edge--cloud VQA pipelines, agentic memory systems, and bandwidth-constrained cyber-physical deployments.
Fixed-fidelity baselines always operate at a single element of~$\mathcal{F}$, whereas \textsc{VOILA} adaptively selects the fidelity on a per-query basis.

\paragraph{Cost Model.}
Since VOILA optimizes \emph{pre-retrieval} fidelity selection decisions the cost function captures \emph{input acquisition overhead} rather than model-internal computation or token usage. Each fidelity \(f \in \mathcal{F}\) incurs an acquisition cost
\[
c(f) \;=\; \alpha\,B(f) + \beta\,L(f) + \gamma\,S(f),
\]
where \(B(f)\) is the empirically measured bandwidth ratio relative to the full-resolution image, \(L(f)\) captures retrieval latency associated with the storage or transmission tier, and \(S(f)\) accounts for storage access penalties.
Costs are normalized such that \(c(\texttt{full})=120\), and all other fidelities are scaled proportionally using their measured size ratios and tier-dependent penalties (details in Appendix~\ref{sec:cost_model}). VOILA depends only on \emph{relative} cost differences: uniform rescaling of \(c(f)\) does not affect routing behavior as long as higher fidelities remain more expensive, a property we verify empirically via sensitivity analysis. The complete cost formulation, tier-based pricing structure, and empirical 
measurement methodology are detailed in Appendix~\ref{sec:locomo}.

\textbf{Evaluation Metrics}: We report task accuracy and average cost per query for each method. Performance is assessed by comparing the accuracy--cost achieved by \textsc{VOILA} against fixed-fidelity baselines. For calibration quality, we additionally report Brier score in the ablation studies.

%% file: results.tex
\section{Results and Discussions}
\label{sec:results}
\begin{figure*}[htb]
    \centering
    \includegraphics[width=0.95\linewidth]{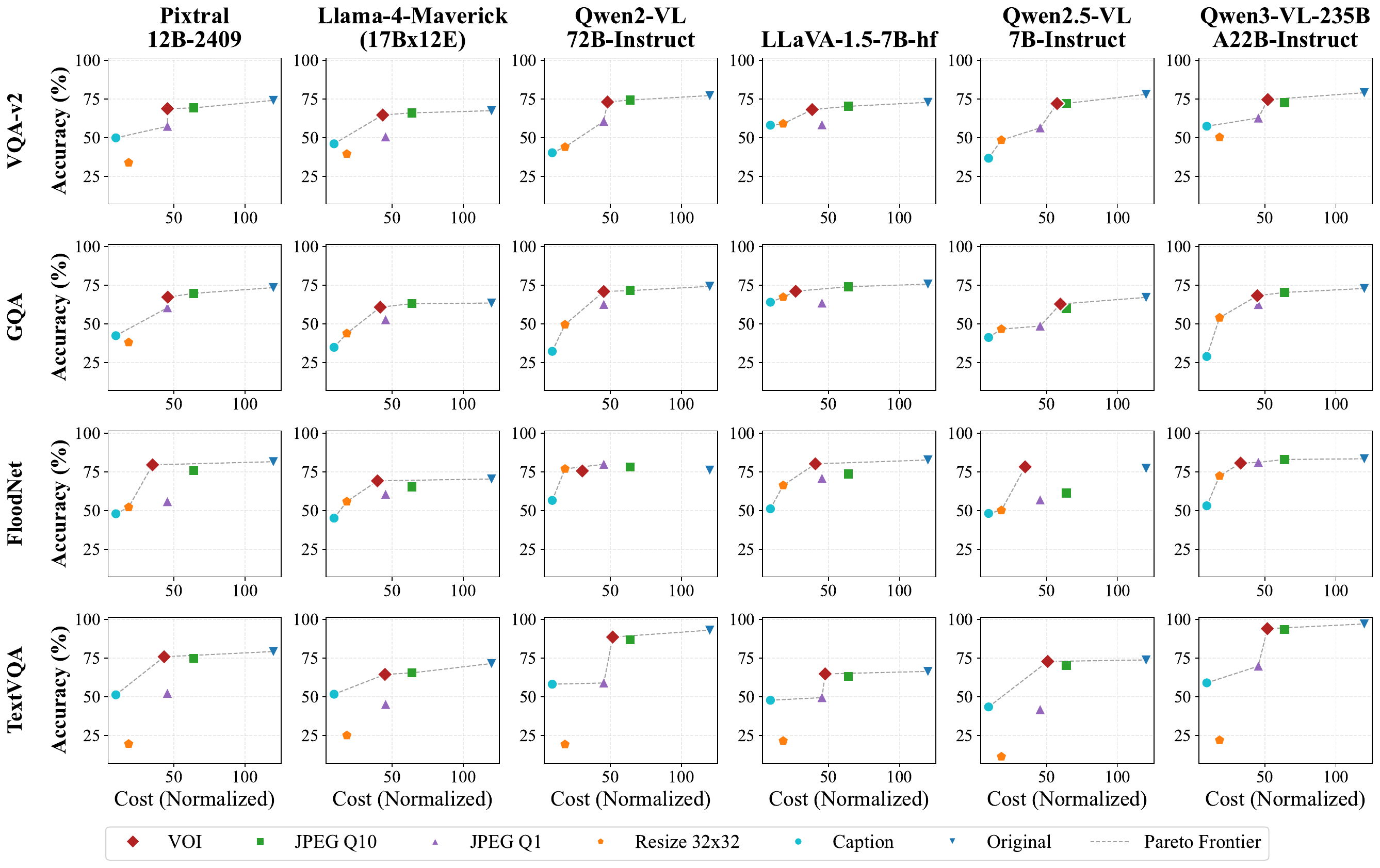}
    \caption{\textbf{Accuracy--cost Pareto frontiers across datasets and models.}
Each subplot plots accuracy versus normalized retrieval cost (upper left is better) for a dataset--model pair, comparing \textsc{VOILA} to fixed-fidelity baselines.
Dashed curves denote Pareto frontiers.
Across datasets and model scales, \textsc{VOILA} lies on or near the frontier, achieving near--full-resolution accuracy at 50--60\% lower average cost, while no single fixed fidelity is Pareto-optimal.}
    \label{fig:pareto}
\end{figure*}
\textbf{VOILA consistently matches near–full-resolution accuracy at dramatically lower cost}: The Pareto frontiers in Fig.~\ref{fig:pareto} show that VOILA lies consistently close to the full-resolution operating point across all four datasets and six models, while operating at substantially lower cost. The Complete accuracy/cost tables and fidelity distributions for all model-dataset combinations are 
provided in Appendix~\ref{sec:complete_results} (Table~\ref{tab:complete_results_vqa_gqa_flood_textvqa} and Figure~\ref{fig:voila_fidelity_distribution}). In most cases, VOILA achieves accuracy within 2–6 percentage points of always using full-resolution images, yet reduces normalized retrieval cost by roughly 50–60 \%. This pattern is stable across diverse visual regimes, including OCR-heavy TextVQA and bandwidth-constrained FloodNet, indicating that VOILA reliably identifies when high-fidelity visual information is genuinely required and avoids unnecessary retrieval otherwise. The resulting Pareto dominance demonstrates that VOILA effectively captures query-specific information requirements, delivering near-optimal accuracy without paying the full cost of maximal fidelity.


\textbf{Gains persist under distribution shift and scale}: The same qualitative Pareto improvements hold across model sizes ranging from 7B to 235B parameters and across datasets with distinct linguistic and visual distributions. Larger models improve absolute accuracy, but the relative gap between fixed-fidelity baselines remains, and VOILA continues to recover most of the accuracy of full-resolution inputs at a fraction of the cost. This indicates that model scaling alone cannot compensate for insufficient visual information: when critical details are missing, even very large models fail. VOILA’s consistent gains therefore reflect a complementary optimization axis—adaptive information acquisition—rather than an artifact of model capacity or dataset-specific tuning.

\begin{table}[htb]
\centering
\scriptsize
\setlength{\tabcolsep}{3pt}
\renewcommand{\arraystretch}{0.95}
\caption{\textbf{VOI out-of-distribution performance.}
Accuracy (\%) and avg. cost when training on one dataset and testing on the other.}
\label{tab:ood_pixtral_compact}

\resizebox{\columnwidth}{!}{
\begin{tabular}{lcc}
\toprule
\textbf{Model} 
& \textbf{VQA$\rightarrow$GQA} 
& \textbf{GQA$\rightarrow$VQA} \\
\midrule
Llama-4-Maverick (17Bx12E) 
& 58.57 / 48.55 
& 64.77 / 45.87 \\
Pixtral-12B-2409  
& 64.77 / 45.87 
& 63.34 / 42.26 \\
Qwen2.5-VL-7B     
& 62.87 / 40.71 
& 62.44 / 34.93 \\
LLaVA-1.5-7B     
& 69.12 / 64.60 
& 69.76 / 58.64 \\
Qwen3-VL-235B     
& 74.49 / 73.40 
& 75.98 / 67.98 \\
Qwen2-VL-72B     
& 65.61 / 34.23 
& 66.24 / 33.13 \\
\bottomrule
\end{tabular}}
\end{table}

\textbf{Out-of-Distribution Generalization Holds.}
Table~\ref{tab:ood_pixtral_compact} reports VOILA’s accuracy and average acquisition cost when trained on one dataset and evaluated on the other.
Across all six vision--language models, VOILA exhibits stable accuracy--cost behavior under both VQA$\rightarrow$GQA and GQA$\rightarrow$VQA transfer,
with only modest degradation in accuracy and limited variation in average cost. For example, Pixtral achieves 64.77\% accuracy at a cost of 45.87 when trained on VQA-v2 and tested on GQA, and 63.34\% accuracy at a cost of 42.26 in the reverse direction.
Similar symmetry is observed across model scales, from Qwen2.5-VL-7B (62.87\% / 40.71 vs.\ 62.44\% / 34.93) to the 235B-parameter Qwen3-VL model (74.49\% / 73.40 vs.\ 75.98\% / 67.98), indicating that VOILA’s routing policy does not overfit to a particular dataset distribution.
Notably, average acquisition costs remain close to the in-distribution VOI operating point, rather than collapsing to full-resolution retrieval under shift.
These results suggest that VOILA’s adaptive fidelity selection generalizes across datasets, and that the minimum visual information required to answer a query is primarily determined by query characteristics rather than dataset-specific statistics.

\begin{table}[htb]
\centering
\small
\caption{\textbf{LoCoMo memory recall (Acc \% / Cost).}
LoCoMo natively provides BLIP~\cite{li2023blip2} caption and full-image memory; intermediate fidelities are omitted for compactness but are used by VOI to reflect realistic agent memory systems.
}
\label{tab:locomo_ultracompact}
\setlength{\tabcolsep}{0pt}

\newcommand{\locomomin}[2]{%
\begin{tabular}{@{}p{0.26\columnwidth}p{0.20\columnwidth}@{}}
\toprule
\textbf{Method} & \textbf{Acc / Cost} \\
\midrule
Caption  & #1 \\
Original & #2 \\
\textbf{VOI} & \textbf{\locomoVOI} \\
\bottomrule
\end{tabular}%
}

\begin{tabular}{@{}p{0.49\columnwidth}@{\hfill}p{0.49\columnwidth}@{}}
\toprule

\textbf{Pixtral-12B-2409} &
\textbf{Qwen2.5-VL-7B-Instruct} \\
\addlinespace[1pt]

\begin{tabular}{@{}p{0.26\columnwidth}p{0.20\columnwidth}@{}}
\toprule
\textbf{Method} & \textbf{Acc / Cost} \\
\midrule
Caption  & 32.5 / 10 \\
Original & 46.7 / 120 \\
\textbf{VOI} & \textbf{43.6 / 69.5} \\
\end{tabular}
&
\begin{tabular}{@{}p{0.26\columnwidth}p{0.20\columnwidth}@{}}
\toprule
\textbf{Method} & \textbf{Acc / Cost} \\
\midrule
Caption  & 26.8 / 10 \\
Original & 73.0 / 120 \\
\textbf{VOI} & \textbf{67.1 / 52.3} \\
\end{tabular}
\\
\addlinespace[2pt]
\midrule
\addlinespace[2pt]

\textbf{LLaVA-1.5-7B-hf} &
\textbf{Qwen3-VL-235B-A22B} \\
\addlinespace[1pt]

\begin{tabular}{@{}p{0.26\columnwidth}p{0.20\columnwidth}@{}}
\toprule
\textbf{Method} & \textbf{Acc / Cost} \\
\midrule
Caption  & 43.7 / 10 \\
Original & 64.7 / 120 \\
\textbf{VOI} & \textbf{61.5 / 53.0} \\
\end{tabular}
&
\begin{tabular}{@{}p{0.26\columnwidth}p{0.20\columnwidth}@{}}
\toprule
\textbf{Method} & \textbf{Acc / Cost} \\
\midrule
Caption  & 27.4 / 10 \\
Original & 76.7 / 120 \\
\textbf{VOI} & \textbf{72.6 / 51.5} \\
\end{tabular}
\\
\addlinespace[2pt]
\midrule
\addlinespace[2pt]

\textbf{Llama-4-Maverick(17Bx12E)} &
\textbf{Qwen2-VL-72B-Instruct} \\
\addlinespace[1pt]

\begin{tabular}{@{}p{0.26\columnwidth}p{0.20\columnwidth}@{}}
\toprule
\textbf{Method} & \textbf{Acc / Cost} \\
\midrule
Caption  & 29.3 / 10 \\
Original & 60.8 / 120 \\
\textbf{VOI} & \textbf{56.2 / 50.9} \\
\end{tabular}
&
\begin{tabular}{@{}p{0.26\columnwidth}p{0.20\columnwidth}@{}}
\toprule
\textbf{Method} & \textbf{Acc / Cost} \\
\midrule
Caption  & 26.4 / 10 \\
Original & 76.0 / 120 \\
\textbf{VOI} & \textbf{68.1 / 40.8} \\
\end{tabular}
\\

\bottomrule
\end{tabular}
\end{table}

\begin{table*}[htb] \centering \scriptsize \renewcommand{\arraystretch}{0.95} \setlength{\tabcolsep}{2.5pt} \caption{\textbf{Ablations across 6 VLMs on VQA-v2 and GQA.} Entries report \textbf{Acc (\%) / Cost}, comparing VOI routing under different calibration methods and decision rules. Isotonic calibration (\textsc{VOILA}, bolded) consistently yields the best cost--accuracy tradeoff.
} \label{tab:ablation_combined} \resizebox{\textwidth}{!}{ \begin{tabular}{l|ccc|cc|ccc|cc} \toprule & \multicolumn{5}{c|}{\textbf{VQA-v2}} & \multicolumn{5}{c}{\textbf{GQA}} \\ \cmidrule(lr){2-6} \cmidrule(lr){7-11} \textbf{VLM} & \textbf{None} & \textbf{Iso} & \textbf{Temp} & \textbf{Acc} & \textbf{Thr} & \textbf{None} & \textbf{Iso} & \textbf{Temp} & \textbf{Acc} & \textbf{Thr} \\ \midrule Pixtral-12B & 66.24/52.18 & \textbf{68.67/45.41} & 66.89/56.32 & 74.06/120.0 & 69.21/63.90 & 64.12/51.23 & \textbf{67.37/45.66} & 63.84/49.67 & 73.52/120.0 & 69.81/63.90 \\ Qwen-3-VL-235B & 72.13/58.91 & \textbf{74.57/52.04} & 73.24/61.08 & 79.01/120.0 & 72.58/63.90 & 66.45/51.84 & \textbf{68.28/44.70} & 65.72/48.23 & 72.98/120.0 & 70.40/63.90 \\ LLaVA-1.5-7b & 65.84/44.73 & \textbf{68.07/38.52} & 64.23/46.91 & 72.80/120.0 & 70.21/63.90 & 68.92/32.45 & \textbf{71.18/26.86} & 67.18/35.62 & 75.77/120.0 & 74.11/63.90 \\ LLaMA-4 & 62.15/49.84 & \textbf{64.61/43.37} & 61.34/51.92 & 67.37/120.0 & 65.99/63.90 & 58.23/47.91 & \textbf{60.87/41.64} & 57.45/49.38 & 63.56/120.0 & 63.17/63.90 \\ Qwen-2-VL-72B & 70.48/54.72 & \textbf{73.01/48.00} & 71.15/58.34 & 77.16/120.0 & 74.27/63.90 & 68.34/51.28 & \textbf{71.03/45.37} & 67.92/53.71 & 74.35/120.0 & 71.57/63.90 \\ Qwen-2.5-VL-7B & 69.23/63.84 & \textbf{72.01/57.31} & 68.91/67.15 & 77.98/120.0 & 72.10/63.90 & 60.18/65.73 & \textbf{62.92/59.66} & 58.94/68.42 & 67.22/120.0 & 60.14/63.90 \\ \bottomrule \end{tabular}} \end{table*}

\begin{table}[htb] \centering \scriptsize \setlength{\tabcolsep}{3pt} \renewcommand{\arraystretch}{0.95} \caption{\textbf{Predictor robustness under fixed calibration (Isotonic), for Pixtral on VQA-v2 and GQA.} Performance is broadly similar across predictors once calibration is fixed, indicating VOILA is not brittle to predictor choice. Each entry reports \textbf{Acc (\%) / Cost}.} \label{tab:predictor_robustness_pixtral} \resizebox{\columnwidth}{!}{ \begin{tabular}{lcccc} \toprule \textbf{Dataset} & \textbf{GBR} & \textbf{LogReg} & \textbf{Ridge} & \textbf{MLP} \\ \midrule VQA-v2 & 68.67/45.41 & 67.92/47.23 & 68.14/46.58 & 67.35/48.91 \\ GQA & 67.37/45.66 & 67.81/46.84 & 67.73/45.12 & 68.05/49.27 \\ \bottomrule \end{tabular}} \end{table}

\textbf{VOILA Extends Beyond Static Benchmarks to Agentic Memory Systems.}
LoCoMo evaluates long-horizon agentic memory under a highly constrained setting in which only caption-level summaries are cached and full-image recall is expensive.
Despite being trained exclusively on static VQA benchmarks (VQA-v2 and GQA), VOI generalizes effectively to LoCoMo, consistently achieving strong accuracy--cost tradeoffs across models without any task- or dataset-specific tuning. For Qwen2.5-VL-7B, VOI attains 67.1\% accuracy at a cost of 52.3, substantially outperforming all cheaper fixed-fidelity baselines while avoiding the high cost of always recalling the original image (72.9\% at cost 120).
Similarly, for LLaVA-1.5-7B, VOI reaches 61.5\% accuracy at cost 53.0, closely matching full-image performance (64.7\%) at less than half the retrieval cost.
Across models, VOI consistently recovers a large fraction of full-image accuracy while operating in a regime where indiscriminate high-fidelity recall would be prohibitively expensive. Crucially, these results are obtained in an out-of-distribution setting: VOI is trained on question--answer pairs from VQA-v2 and GQA, yet deployed on LoCoMo without access to intermediate visual fidelities.
This demonstrates that VOI learns query-driven signals about when additional visual information is valuable, rather than overfitting to dataset-specific heuristics or fidelity availability.
Overall, VOI enables robust, cost-aware memory retrieval while using high-fidelity perception only when necessary.

\textbf{VOILA Enables Cost-Aware Perception in IoT / CPS Systems.} FloodNet exhibits a distinct accuracy--cost structure, underscoring the need for adaptive perception in cyber--physical settings. Caption-only inputs perform poorly (often $>25$ accuracy points lower), while fixed mid-quality JPEG baselines are consistently Pareto-dominated, incurring high transmission cost without commensurate gains. In contrast, VOILA selects the minimum sufficient fidelity per query, achieving near--full-resolution accuracy while reducing average retrieval cost by 45--70\%. These results indicate that in bandwidth-constrained edge--cloud and drone deployments, adaptive pre-retrieval fidelity selection is essential for reliable, decision-critical perception.

\textbf{Calibration is the central enabler of effective VOI routing across both VQA-v2 and GQA.}
Across all six VLMs and both datasets (Table~\ref{tab:ablation_combined}), isotonic calibration (VOILA default) consistently delivers the strongest cost–accuracy tradeoff, reducing acquisition cost by roughly 15–30 \% relative to uncalibrated routing while maintaining or improving accuracy. This pattern holds uniformly across model scales and datasets, confirming that calibration primarily improves \emph{routing efficiency} rather than raw task performance by enabling reliable confidence estimation for escalation decisions. In contrast, temperature scaling often increases cost without commensurate accuracy gains, and uncalibrated predictors exhibit unstable behavior due to systematic over- or under-confidence. These results demonstrate that accurate probability calibration is a prerequisite for translating success prediction into meaningful utility gains in sequential, cost-sensitive inference.

\textbf{VOI routing dominates heuristic policies while remaining robust to predictor choice, with GBR offering a strong practical default.} Across datasets, VOILA consistently lies on the Pareto frontier: accuracy-only routing incurs near full-resolution cost for marginal gains, while fixed-threshold heuristics trade cost savings for unstable and often significant accuracy loss. Predictor ablations (Table~\ref{tab:predictor_robustness_pixtral}) show that with isotonic calibration, lightweight models (GBR, logistic, ridge, MLP) achieve comparable accuracy–cost tradeoffs, indicating that routing quality is driven by probability calibration rather than predictor class. We adopt GBR as a strong practical default due to its ability to capture non-linear question–fidelity interactions with minimal tuning, while retaining flexibility to swap predictors without affecting VOILA’s core behavior.

\textbf{VOILA Overheads.} VOILA imposes minimal overhead: 0.45ms per query for fidelity selection versus 2.4--62.8 seconds for VLM inference. The system requires only 0.04MB memory and runs entirely on CPU, making it suitable for edge devices, smartphones, and IoT deployments. Training is fast (21s per configuration) and parallelizable, with hyperparameter search completing in under 10 minutes—a one-time offline cost that generalizes across query distributions. See Appendix~\ref{sec:computational_overhead} for detailed measurements.






%% file: related_works.tex
\section{Related Work}~\label{sec:related}

\noindent\textbf{Adaptive Inference and Model Routing.}
Prior work studies adaptive computation by routing queries across models or dynamically allocating compute, including early-exit and adaptive depth mechanisms \cite{zhou2020bert, schuster2022confident} and cost-aware LLM routing \cite{chen2023frugalgpt, shnitzer2023large, ding2024hybrid, wang2024routellm}.
These methods optimize \emph{how} inference is performed given fixed inputs, whereas VOILA focuses on \emph{what information} to retrieve before inference.

\noindent\textbf{Efficiency and Compression in Multimodal QA.}
Several approaches reduce multimodal cost by compressing or simplifying visual context, such as lightweight fusion \cite{farazi2021accuracy}, learned context compression \cite{weng2024learning}, and reliance on question-only or latent representations \cite{li2024context, wang2021latent}.
In contrast, VOILA operates \emph{before} retrieval, deciding whether higher-fidelity visual input is needed at all.

\noindent\textbf{Value-of-Information and Cost-Sensitive Acquisition.}
VOILA builds on classical value-of-information and budgeted acquisition frameworks \cite{bilgic2011value, karkkainen2019cost}, including active acquisition in multimodal or temporal settings \cite{kossen2022active} and cost-sensitive decision systems outside ML \cite{Wickens2001}.
Unlike prior work focused on tabular or sequential features, VOILA applies VOI to pre-retrieval multimodal perception.

\noindent\textbf{Systems Context for Multimodal Retrieval.}
Multimodal systems commonly store visual data across heterogeneous tiers, motivating retrieval-aware optimization.
Related work examines tiered storage and retrieval costs in cloud and mobile systems \cite{hort2021survey, zhang2024autonomous}, as well as agentic memory and long-horizon multimodal reasoning \cite{park2023generative, wang2024locomo}.
VOILA explicitly exploits this structure by selecting the minimum sufficient fidelity before accessing higher-cost tiers.

%% file: conclusion.tex
\section{Conclusion}

We introduced \textsc{VOILA}, a value-of-information framework for pre-retrieval fidelity selection in multimodal question answering. Using question-conditioned success prediction with fidelity-specific probabilistic calibration, VOILA selects the minimum-cost visual representation before retrieval.  Across five datasets and six vision--language models (7B--235B), VOILA retains 90--95 \% accuracy of full-resolution processing while reducing average retrieval cost by 50--60\%, with gains holding under model scaling, OOD data, and diverse deployment settings. These results, together with formal regret guarantees, establish adaptive pre-retrieval information acquisition as a principled and practical complement to post-retrieval model routing in multimodal systems.

%% file: complete_results.tex
\section{Complete Results}
\label{sec:complete_results}

\subsection{Complete Accuracy and Cost Tables}
\label{sec:complete_tables}

\begin{table*}[htbp]
\centering
\tiny
\caption{
\textbf{Complete Comparison Table (VQA-v2, GQA, FloodNet, TextVQA).}
Accuracy (\%) and average cost for each method across datasets and models.
All costs use the normalized acquisition model $c(f)$.
\textbf{VOI} rows are bolded.
}
\label{tab:complete_results_vqa_gqa_flood_textvqa}
\resizebox{\linewidth}{!}{
\begin{tabular}{llcccccccc}
\toprule
\multirow{2}{*}{\textbf{Model}} & \multirow{2}{*}{\textbf{Method}}
& \multicolumn{2}{c}{\textbf{VQA-v2}}
& \multicolumn{2}{c}{\textbf{GQA}}
& \multicolumn{2}{c}{\textbf{FloodNet}}
& \multicolumn{2}{c}{\textbf{TextVQA}} \\
\cmidrule(lr){3-4}\cmidrule(lr){5-6}\cmidrule(lr){7-8}\cmidrule(lr){9-10}
& & \textbf{Acc} & \textbf{Cost}
& \textbf{Acc} & \textbf{Cost}
& \textbf{Acc} & \textbf{Cost}
& \textbf{Acc} & \textbf{Cost} \\
\midrule

\multirow{6}{*}{Pixtral-12B-2409}
& Caption & 49.83 & 9.1 & 42.45 & 9.1 & 47.84 & 9.1 & 51.31 & 9.1 \\
& \textbf{VOI} & \textbf{68.67} & \textbf{45.41} & \textbf{67.37} & \textbf{45.66} & \textbf{79.48} & \textbf{34.94} & \textbf{75.92} & \textbf{43.09} \\
& JPEG Q10 & 69.21 & 63.9 & 69.81 & 63.9 & 75.75 & 63.9 & 74.82 & 63.9 \\
& JPEG Q1 & 57.27 & 45.4 & 60.53 & 45.4 & 55.70 & 45.4 & 52.28 & 45.4 \\
& Resize $32{\times}32$ & 33.80 & 18.1 & 38.10 & 18.1 & 52.05 & 18.1 & 19.57 & 18.1 \\
& Original & 74.06 & 120.0 & 73.52 & 120.0 & 81.51 & 120.0 & 79.29 & 120.0 \\
\midrule

\multirow{6}{*}{Llama-4-Maverick (17Bx12E)}
& Caption & 45.98 & 9.1 & 34.93 & 9.1 & 45.01 & 9.1 & 51.70 & 9.1 \\
& \textbf{VOI} & \textbf{64.61} & \textbf{43.37} & \textbf{60.87} & \textbf{41.64} & \textbf{69.14} & \textbf{39.68} & \textbf{64.52} & \textbf{44.98} \\
& JPEG Q10 & 65.99 & 63.9 & 63.17 & 63.9 & 65.23 & 63.9 & 65.48 & 63.9 \\
& JPEG Q1 & 50.45 & 45.4 & 52.76 & 45.4 & 60.41 & 45.4 & 45.11 & 45.4 \\
& Resize $32{\times}32$ & 39.42 & 18.1 & 43.92 & 18.1 & 55.73 & 18.1 & 25.11 & 18.1 \\
& Original & 67.37 & 120.0 & 63.56 & 120.0 & 70.35 & 120.0 & 71.56 & 120.0 \\
\midrule

\multirow{6}{*}{Qwen2-VL-72B-Instruct}
& Caption & 40.21 & 9.1 & 32.34 & 9.1 & 56.48 & 9.1 & 58.19 & 9.1 \\
& \textbf{VOI} & \textbf{73.01} & \textbf{48.00} & \textbf{71.03} & \textbf{45.37} & \textbf{75.51} & \textbf{30.30} & \textbf{88.65} & \textbf{51.64} \\
& JPEG Q10 & 74.27 & 63.9 & 71.57 & 63.9 & 78.07 & 63.9 & 86.84 & 63.9 \\
& JPEG Q1 & 60.49 & 45.4 & 62.68 & 45.4 & 79.84 & 45.4 & 58.96 & 45.4 \\
& Resize $32{\times}32$ & 43.90 & 18.1 & 49.63 & 18.1 & 76.85 & 18.1 & 19.27 & 18.1 \\
& Original & 77.16 & 120.0 & 74.35 & 120.0 & 76.08 & 120.0 & 93.11 & 120.0 \\
\midrule

\multirow{6}{*}{LLaVA-1.5-7B-hf}
& Caption & 58.01 & 9.1 & 64.09 & 9.1 & 51.05 & 9.1 & 47.80 & 9.1 \\
& \textbf{VOI} & \textbf{68.07} & \textbf{38.52} & \textbf{71.18} & \textbf{26.86} & \textbf{80.12} & \textbf{40.77} & \textbf{64.85} & \textbf{47.62} \\
& JPEG Q10 & 70.21 & 63.9 & 74.11 & 63.9 & 73.64 & 63.9 & 63.25 & 63.9 \\
& JPEG Q1 & 58.25 & 45.4 & 63.51 & 45.4 & 70.76 & 45.4 & 49.50 & 45.4 \\
& Resize $32{\times}32$ & 58.99 & 18.1 & 67.37 & 18.1 & 66.22 & 18.1 & 21.50 & 18.1 \\
& Original & 72.80 & 120.0 & 75.77 & 120.0 & 82.61 & 120.0 & 66.45 & 120.0 \\
\midrule

\multirow{6}{*}{Qwen2.5-VL-7B-Instruct}
& Caption & 36.65 & 9.1 & 41.28 & 9.1 & 48.06 & 9.1 & 43.48 & 9.1 \\
& \textbf{VOI} & \textbf{72.01} & \textbf{57.31} & \textbf{62.92} & \textbf{59.66} & \textbf{78.17} & \textbf{34.85} & \textbf{72.90} & \textbf{50.72} \\
& JPEG Q10 & 72.10 & 63.9 & 60.14 & 63.9 & 61.24 & 63.9 & 70.26 & 63.9 \\
& JPEG Q1 & 56.26 & 45.4 & 48.66 & 45.4 & 56.81 & 45.4 & 41.74 & 45.4 \\
& Resize $32{\times}32$ & 48.38 & 18.1 & 46.70 & 18.1 & 50.06 & 18.1 & 11.35 & 18.1 \\
& Original & 77.98 & 120.0 & 67.22 & 120.0 & 77.19 & 120.0 & 73.87 & 120.0 \\
\midrule

\multirow{6}{*}{Qwen3-VL-235B-A22B-Instruct}
& Caption & 57.36 & 9.1 & 28.97 & 9.1 & 52.99 & 9.1 & 59.05 & 9.1 \\
& \textbf{VOI} & \textbf{74.57} & \textbf{52.04} & \textbf{68.28} & \textbf{44.70} & \textbf{80.62} & \textbf{32.94} & \textbf{94.10} & \textbf{51.75} \\
& JPEG Q10 & 72.58 & 63.9 & 70.40 & 63.9 & 82.95 & 63.9 & 93.65 & 63.9 \\
& JPEG Q1 & 62.62 & 45.4 & 62.58 & 45.4 & 81.06 & 45.4 & 69.85 & 45.4 \\
& Resize $32{\times}32$ & 50.21 & 18.1 & 54.08 & 18.1 & 72.31 & 18.1 & 22.00 & 18.1 \\
& Original & 79.01 & 120.0 & 72.98 & 120.0 & 83.39 & 120.0 & 97.15 & 120.0 \\
\bottomrule
\end{tabular}
}
\end{table*}

\subsection{Fidelity Selection Distributions}
\label{sec:fidelity_distributions}

\begin{figure*}[t]
    \centering
    \includegraphics[width=\linewidth]{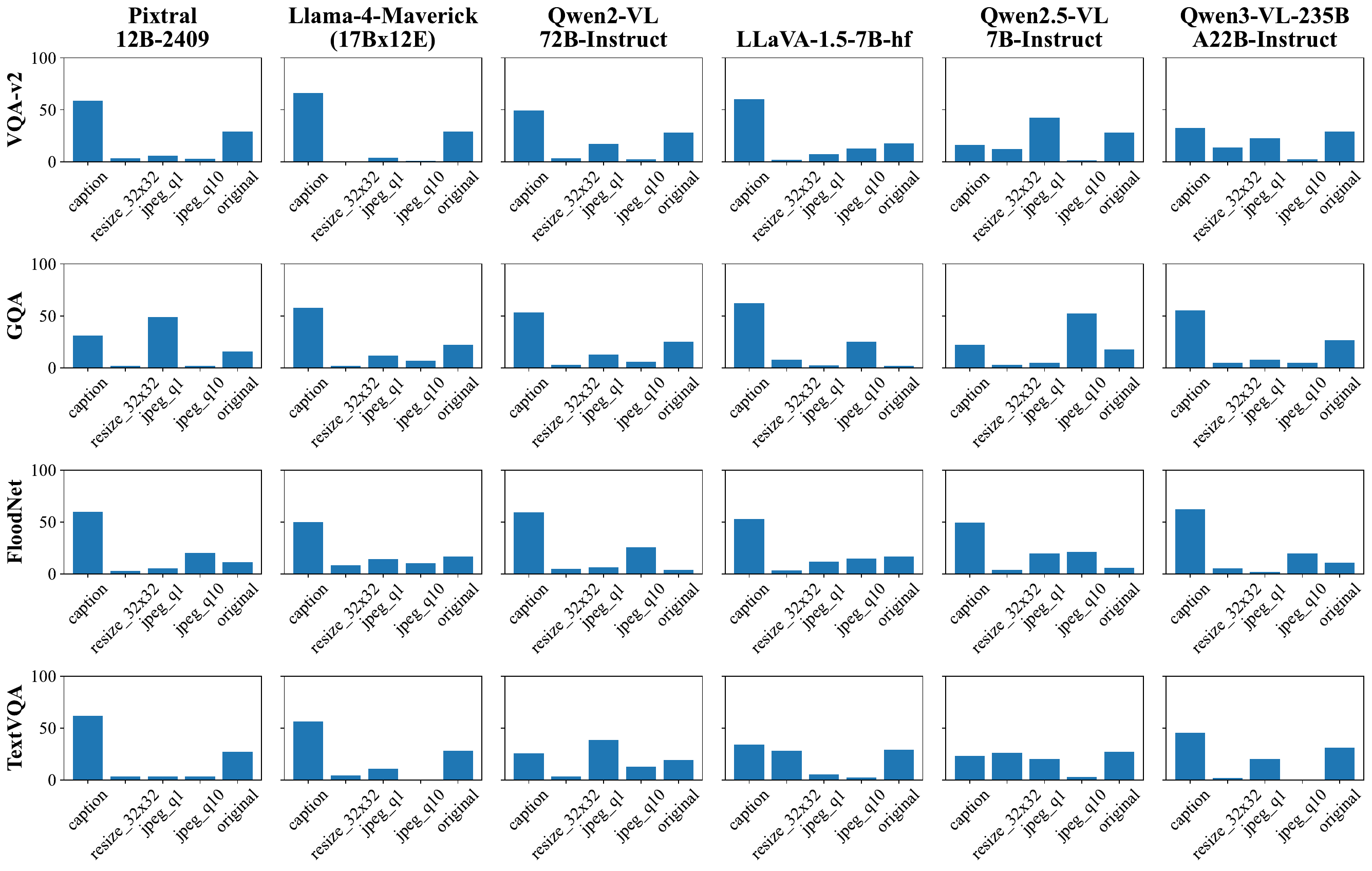}
    \caption{
    \textbf{Learned fidelity selection distributions under VOI routing.}
    Each subplot reports the percentage of queries routed to each input fidelity by \textsc{VOILA}, across four datasets (rows) and six vision--language models (columns).
    Low-cost representations (captions and low-resolution thumbnails) dominate across all settings, while higher-cost fidelities (JPEG Q1/Q10 and full-resolution images) are selected selectively when expected utility justifies their acquisition cost.
    }
    \label{fig:voila_fidelity_distribution}
\end{figure*}

\pagebreak

%% file: datasets.tex
\section{Appendix: Datasets}
\label{sec:datasets}

We evaluate \textsc{VOILA} on five datasets spanning standard VQA benchmarks, text-centric reasoning, cyber--physical perception, and long-horizon agentic memory.
Together, these datasets allow us to assess accuracy--cost tradeoffs under diverse visual content, linguistic structure, and deployment constraints.

\subsection{VQA-v2, GQA, and TextVQA}
\label{sec:vqa_datasets}
\paragraph{VQA-v2.}
VQA-v2 \cite{goyal2017vqav2} is a large-scale visual question answering benchmark consisting of real-world images paired with free-form questions and short answers.
It covers a broad range of question types, including object recognition, counting, attributes, and commonsense reasoning, and is designed to reduce language priors through balanced answer distributions.
We use the standard train/validation splits.

\paragraph{GQA.}
GQA \cite{hudson2019gqa} focuses on compositional and relational reasoning over images.
Questions are generated from structured scene graphs and emphasize multi-step reasoning, spatial relations, and logical composition.
We use GQA to test whether VOI generalizes beyond surface-level recognition to structured visual reasoning.

\paragraph{TextVQA.}
TextVQA \cite{singh2019textvqa} evaluates text-centric visual understanding, requiring models to read and reason over scene text embedded in images.
This dataset stresses OCR-dependent perception and complements VQA-v2 and GQA by emphasizing linguistic grounding in visual text.

\subsection{FloodNet.}
\label{sec:floodnet}
FloodNet \cite{rahnemoonfar2021floodnet} models a mission-critical cyber--physical perception setting using aerial imagery captured by UAVs during flood events.
Questions and labels focus on damage assessment and situational awareness under bandwidth and latency constraints.
FloodNet serves as a proxy for cost-sensitive perception in IoT and emergency-response systems.

\subsection{LoCoMo Question Generation.}
\label{sec:locomo}
LoCoMo~\cite{wang2024locomo} evaluates long-horizon agentic memory retrieval, where an agent must decide when expensive visual memory recall is necessary.
Unlike static VQA datasets, LoCoMo natively provides only (i) caption-level summaries cached in memory and (ii) the original image available via costly on-demand recall.

To enable quantitative evaluation, we construct a VQA-style benchmark over LoCoMo memory items.
Starting from LoCoMo conversational memory entries containing an image URL and associated evidence text, we generate a total of 3,500 question--answer pairs.
For each image, we produce multiple questions that are explicitly constrained to:
(i) reference visual content present in the image,
(ii) be relevant to the conversational evidence text, and
(iii) require varying degrees of visual detail to answer.

Question generation is performed using GPT-4o conditioned jointly on the image and its evidence text.
For each image, we generate a diverse set of questions spanning object counting, yes/no verification, spatial relations, attribute identification, and scene-level understanding using the following prompt:

\begin{quote}
\small
\texttt{Based on this image and the provided evidence text, generate exactly 5 diverse Visual Question Answering (VQA) questions with their answers.}

\texttt{Evidence text: \{evidence\_text\}}

\texttt{Generate questions in these categories:}\\
\texttt{1. Object counting (e.g., "How many people are in the image?")}\\
\texttt{2. Yes/No question (e.g., "Is there a dog in the image?")}\\
\texttt{3. Spatial awareness (e.g., "Where is the person standing?")}\\
\texttt{4. Object identification (e.g., "What color is the shirt?")}\\
\texttt{5. Scene understanding (e.g., "What activity is happening?")}

\texttt{Return ONLY a JSON array with 5 objects, each having "question" and "answer" fields. Example format:}\\
\texttt{[}\\
\texttt{\ \ \{\{"question": "How many people are visible?", "answer": "2"\}\},}\\
\texttt{\ \ \{\{"question": "Is there a tree in the background?", "answer": "yes"\}\},}\\
\texttt{\ \ ...}\\
\texttt{]}
\end{quote}

All generated questions are paired with answers produced by GPT-4o and manually filtered to ensure validity and visual grounding. 


This construction yields a LoCoMo-derived VQA benchmark that preserves the core constraint of agentic memory systems—cheap semantic summaries with expensive high-fidelity recall—while enabling controlled measurement of adaptive fidelity selection.
Importantly, all LoCoMo experiments are conducted in an out-of-distribution setting: VOI is trained on VQA-v2 and GQA, and evaluated on LoCoMo without task-specific retraining.

%% file: theory.tex
\section{Theoretical Foundations of \textsc{Voila}}
\label{sec:theory}

We formalize adaptive fidelity selection as a \textbf{cost-sensitive Bayesian decision problem},
show that \textsc{Voila} implements a \emph{near-Bayes-optimal} sequential greedy policy when its
probability estimates are calibrated, and quantify the resulting utility gap on unseen
(test) data.

\vspace{-0.3em}
\subsection{Bayes-Optimal Fidelity Selector}
\label{sec:bayes_optimal}

Let $\mathcal{F}=\{f_1,\dots,f_K\}$ denote the
available visual fidelities
(e.g., caption, JPEG $q{=}10$, full\_res),
ordered by increasing cost $c_1 < c_2 < \cdots < c_K$, where each $c_k > 0$ represents
latency, bandwidth, or storage retrieval cost.

For a question~$q$, let
\[
p_k(q)\;=\;\Pr\!\bigl[\,\text{answer correct} \mid q, f_k\bigr]
\]
denote the \emph{true} probability of answering correctly using fidelity~$f_k$.
Define the \textbf{utility} of selecting fidelity~$f_k$ as
\[
U_k(q)\;=\;p_k(q)\;-\;\lambda c_k,
\]
where $\lambda > 0$ is a user-chosen trade-off coefficient that controls the relative importance
of accuracy versus cost.

The \textbf{Bayes-optimal} decision rule is
\begin{equation}
f^\star(q)\;=\;\argmax_{k \in \{1,\dots,K\}} U_k(q).
\tag{1}
\label{eq:bayes_optimal}
\end{equation}

This rule assumes \emph{oracle access} to the true probabilities $p_k(q)$ for all fidelities.
In practice, these probabilities are unknown and must be estimated from data.

\vspace{-0.3em}
\subsection{Learning Utility via Question-Conditioned Calibration}
\label{sec:calibration}

Because $p_k(q)$ is unknown at test time,
\textsc{Voila} learns an estimator via a two-stage pipeline:

\paragraph{Stage 1: Gradient-Boosted Regression.}
For each fidelity~$f_k$, we extract question features $\phi(q) \in \mathbb{R}^d$
(TF-IDF embeddings, question length, WH-word indicators, etc.)
and train a Gradient Boosting Regressor (GBR) to predict a raw score:
\[
\tilde z_k(q) \;=\; \text{GBR}_k\bigl(\phi(q)\bigr).
\]
The GBR is trained on labeled data $\{(q_i, y_{i,k})\}$, where $y_{i,k} \in \{0,1\}$ indicates
whether the model answered question~$q_i$ correctly using fidelity~$f_k$.

\paragraph{Stage 2: Isotonic Calibration.}
The raw scores $\tilde z_k$ are typically poorly calibrated. We apply
\emph{isotonic regression}~\cite{nuesch1991order}
to map them to calibrated probabilities:
\[
\hat p_k(q)\;=\;
g_k\!\bigl(\sigma\!\bigl(\tilde z_k(q)\bigr)\bigr),
\]
where $\sigma$ is the logistic sigmoid and $g_k: [0,1] \to [0,1]$ is a monotone non-decreasing
function fitted on a held-out calibration set to minimize calibration error
(e.g., expected calibration error or Brier score).

\paragraph{Deployment Selector.}
At test time, \textsc{Voila} computes the estimated utility for each fidelity and selects
\begin{equation}
\hat f(q)
\;=\;
\argmax_{k \in \{1,\dots,K\}}
\bigl\{\,
\hat p_k(q)\;-\;\lambda c_k
\bigr\}.
\tag{2}
\label{eq:voila_selector}
\end{equation}

Crucially, this decision is made \emph{before retrieving any visual fidelity}, using only
question features $\phi(q)$. This distinguishes \textsc{Voila} from post-retrieval model
routing approaches, which assume all context is already available.

\vspace{-0.3em}
\subsection{Optimality Under Perfect Calibration}
\label{sec:optimality}

\begin{proposition}[Exact Optimality]
If $\hat p_k(q)=p_k(q)$ for all $k \in \{1,\dots,K\}$ and all queries $q$,
then the selector~\eqref{eq:voila_selector} coincides with the Bayes-optimal rule~\eqref{eq:bayes_optimal}.
\end{proposition}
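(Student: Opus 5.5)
The argument is a direct substitution: under the hypothesis the estimated utility equals the true utility pointwise, so the two argmax problems are literally the same. Fix an arbitrary query $q$ and define the estimated utility $\hat U_k(q) = \hat p_k(q) - \lambda c_k$. By assumption $\hat p_k(q) = p_k(q)$ for every $k \in \{1,\dots,K\}$. Hence $\hat U_k(q) = U_k(q)$ for every $k$, since the cost term $\lambda c_k$ is identical in both and does not depend on any estimate.

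The selector~\eqref{eq:voila_selector} maximizes $\hat U_k(q)$ over $k$, and the Bayes-optimal rule~\eqref{eq:bayes_optimal} maximizes $U_k(q)$ over the same finite index set. Two real-valued functions on $\{1,\dots,K\}$ that agree at every point have the same set of maximizers. Because the index set is finite, each argmax is nonempty. Therefore $\hat f(q) = f^\star(q)$ as sets, and $U_{\hat f(q)}(q) = \max_k U_k(q)$. Since $q$ was arbitrary, the selectors coincide for all queries.

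The only delicate point is ties. When several fidelities attain the maximum, $\argmax$ is set-valued, so ``coincides'' must be made precise. I would state it in two forms: as equality of maximizer sets, or, if a single fidelity is output, under a common deterministic tie-breaking rule applied to both selectors (for instance, choosing the lowest-cost fidelity, consistent with the minimum-cost philosophy of \S\ref{sec:voi}). Under either reading the equality follows immediately. In all cases the utility achieved by \textsc{Voila} equals the Bayes-optimal utility, which is the statement that carries over to the regret analysis in Appendix~\ref{sec:regret}.
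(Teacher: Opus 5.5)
Your proposal is correct and is the paper's argument: the paper's entire proof is the single observation that identical utility functions yield identical maximizers. Your explicit handling of ties, via equality of maximizer sets or a shared deterministic tie-breaking rule, is a refinement the paper leaves implicit.
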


\begin{proof}
Identical utility functions yield identical maximizers.
\end{proof}

\noindent
This result establishes that calibration quality directly determines decision optimality.

\vspace{-0.3em}
\subsection{Regret with Imperfect Calibration}
\label{sec:regret}

In practice, calibration is imperfect. We quantify the decision quality degradation
via a \emph{regret bound}.

\begin{definition}[Uniform Calibration Error]

\[
\varepsilon\;=\;
\sup_{q \in \mathcal{Q}}\max_{k \in \{1,\dots,K\}}\bigl|\,\hat p_k(q)-p_k(q)\bigr|,
\]
where $\mathcal{Q}$ is the distribution of test queries.
\end{definition}

\begin{theorem}[Utility Regret Bound]
\label{thm:regret}
Let $U^\star(q)=\max_{k}U_k(q)$
and $\hat U(q)=U_{\hat f(q)}(q)$ denote the optimal and achieved utilities, respectively.
For every question $q$,
\[
U^\star(q)\;-\;\hat U(q)\;\le\;2\varepsilon.
\]
Hence the population-level expected utility gap satisfies
$\mathbb{E}_{q \sim \mathcal{Q}}[U^\star(q)-\hat U(q)]\le 2\varepsilon.$
\end{theorem}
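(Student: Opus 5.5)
The argument is the standard plug-in (estimated-utility) comparison. Fix a query $q$ and write $k^\star \in \argmax_k U_k(q)$ for a Bayes-optimal index as in~\eqref{eq:bayes_optimal}, and $\hat k=\hat f(q)$ for the index chosen by the selector~\eqref{eq:voila_selector}. Define the estimated utilities $\hat U_k(q)=\hat p_k(q)-\lambda c_k$. The cost term $\lambda c_k$ is the same in $U_k$ and $\hat U_k$, so it cancels in every difference. By the Definition of uniform calibration error,
\[
|\hat U_k(q)-U_k(q)| = |\hat p_k(q)-p_k(q)| \le \varepsilon
\]
for every $k$ and every $q$ in the support of $\mathcal{Q}$.

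Next I split the regret into three terms by adding and subtracting the estimated utilities at $k^\star$ and $\hat k$:
\[
U^\star(q)-\hat U(q) = \bigl[U_{k^\star}(q)-\hat U_{k^\star}(q)\bigr] + \bigl[\hat U_{k^\star}(q)-\hat U_{\hat k}(q)\bigr] + \bigl[\hat U_{\hat k}(q)-U_{\hat k}(q)\bigr].
\]
The first and third brackets are each at most $\varepsilon$ by the uniform bound. The middle bracket is $\le 0$ because $\hat k$ maximizes $\hat U_k(q)$ over all $k$, including $k^\star$. Adding the three gives $U^\star(q)-\hat U(q)\le 2\varepsilon$. Nonnegativity of the regret follows from the definition of $U^\star$, though the bound does not need it. Ties in either argmax do not matter, since the argument holds for any choice of maximizer.

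For the population statement, the pointwise bound holds for every $q$ in the support of $\mathcal{Q}$, so taking expectations preserves it by monotonicity of expectation. The only measurability requirement is that $q\mapsto U^\star(q)-\hat U(q)$ be integrable, which holds because it lies in $[0,2\varepsilon]$.

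There is no real technical obstacle here. The one point needing care is \emph{interpretive}: the theorem concerns the one-shot argmax selector~\eqref{eq:voila_selector}, not the sequential greedy escalation of Algorithm~\ref{alg:voila} with threshold $\tau$. I would state this explicitly so the bound is not read as covering the greedy procedure, which is treated separately later in the appendix. I would also note that $\varepsilon$ is a supremum over the query support, which is what makes the pointwise bound valid for every test query.
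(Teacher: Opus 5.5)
Your proposal is correct and follows essentially the same route as the paper: the paper combines the argmax inequality $\hat p_{\hat k}(q)-\lambda c_{\hat k}\ge \hat p_{k^\star}(q)-\lambda c_{k^\star}$ with the one-sided bounds $p_{k^\star}(q)\le \hat p_{k^\star}(q)+\varepsilon$ and $p_{\hat k}(q)\ge \hat p_{\hat k}(q)-\varepsilon$, and your three-term add-and-subtract decomposition is a repackaging of that same chain. Your caveat is also well placed: the bound covers only the one-shot argmax selector, not the threshold-based greedy escalation of Algorithm~1.
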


\begin{proof}
Let $k^\star=f^\star(q)$ be the Bayes-optimal fidelity and $\hat k = \hat f(q)$ be the fidelity
selected by \textsc{Voila}.
By definition of $\argmax$,
\begin{equation}
\hat p_{\hat k}(q) - \lambda c_{\hat k}
\;\ge\;
\hat p_{k^\star}(q) - \lambda c_{k^\star}.
\label{eq:argmax_ineq}
\end{equation}
The true utility gap is
\begin{align*}
U^\star(q) - \hat U(q)
&= \bigl(p_{k^\star}(q) - \lambda c_{k^\star}\bigr)
- \bigl(p_{\hat k}(q) - \lambda c_{\hat k}\bigr) \\
&= \bigl(p_{k^\star}(q) - p_{\hat k}(q)\bigr)
- \lambda\bigl(c_{k^\star} - c_{\hat k}\bigr).
\end{align*}
Using $|p_k(q) - \hat p_k(q)| \le \varepsilon$ for all $k$, we have
\[
p_{k^\star}(q) \le \hat p_{k^\star}(q) + \varepsilon
\quad\text{and}\quad
p_{\hat k}(q) \ge \hat p_{\hat k}(q) - \varepsilon.
\]
Substituting,
\begin{align*}
U^\star(q) - \hat U(q)
&\le \bigl(\hat p_{k^\star}(q) + \varepsilon - (\hat p_{\hat k}(q) - \varepsilon)\bigr)
- \lambda\bigl(c_{k^\star} - c_{\hat k}\bigr) \\
&= \bigl(\hat p_{k^\star}(q) - \lambda c_{k^\star}\bigr)
- \bigl(\hat p_{\hat k}(q) - \lambda c_{\hat k}\bigr) + 2\varepsilon \\
&\le 2\varepsilon,
\end{align*}
where the last inequality follows from~\eqref{eq:argmax_ineq}.
\end{proof}

\paragraph{Interpretation.}
The bound is tight and linear in the calibration error: every additional unit of miscalibration
can degrade utility by at most two units. This provides a direct, actionable link between
learning quality (calibration) and system-level performance (decision utility).

\vspace{-0.4em}
\subsection{Generalization to Unseen Questions}
\label{sec:generalization}

Isotonic regression enjoys well-known uniform convergence guarantees~\cite{yang2018contractionuniformconvergenceisotonic, lim2025review}.

\begin{lemma}[Empirical $\!\to\!$ Population Calibration]
\label{lem:gen}
Assume the calibration set $\{(q_i, y_{i,k})\}_{i=1}^N$ is drawn i.i.d.\ from the test
distribution $\mathcal{Q}$, and that the raw scores $\tilde z_k(q)$ are bounded in $[a,b]$.
With probability at least $1-\delta$,
\[
\varepsilon
\;\le\;
\hat\varepsilon
\;+\;
\underbrace{O\!\left(\sqrt{\tfrac{\log(K/\delta)}{N}}\right)}_{\text{statistical gap}},
\]
where $\hat\varepsilon$ is the empirical calibration error on the held-out calibration set,
and $K$ is the number of fidelities.
\end{lemma}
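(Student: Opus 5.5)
The plan is to reduce the statement to a uniform concentration bound for each fidelity separately, and then take a union bound over the $K$ fidelities; this union bound is what produces the $\log(K/\delta)$ term. Fix a fidelity $k$ and write $s_k(q)=\sigma(\tilde z_k(q))\in[\sigma(a),\sigma(b)]$ for the squashed raw score.

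\emph{Interpretation.} Since $\hat p_k(q)=g_k(s_k(q))$ depends on $q$ only through $s_k(q)$, I will read the lemma at the score level. I identify the target $p_k(q)$ with the score-conditional success rate $\eta_k(s)=\Pr[y_k=1\mid s_k(q)=s]$. This is the strongest statement a question-only calibrator can certify, and I will state it explicitly as a modelling assumption. I likewise read the empirical error $\hat\varepsilon$ as
\[
\hat\varepsilon_k=\max_{B}\bigl|\,g_k|_B-\hat\eta_k(B)\,\bigr|,
\]
the maximum, over the constant pieces (blocks) $B$ of the fitted isotonic step function $g_k$, of the gap between the fitted value and the empirical frequency of $y_k=1$ among calibration points whose score falls in $B$.

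\emph{Decomposition.} For $q$ with $s_k(q)\in B$, the triangle inequality gives
\[
|\hat p_k(q)-p_k(q)|\le |g_k|_B-\hat\eta_k(B)| + |\hat\eta_k(B)-\eta_k(B)| + |\eta_k(B)-\eta_k(s_k(q))|,
\]
where $\eta_k(B)=\Pr[y_k=1\mid s_k\in B]$ is the population block frequency. The three terms are handled as follows.
\begin{itemize}
\item The first term is at most $\hat\varepsilon_k$ by definition.
\item The second term is the statistical term. Every block is an interval in score space, and intervals form a VC class of dimension $2$. Applying a DKW/VC uniform deviation bound jointly to the two empirical measures $\Pr_N[s_k\in I]$ and $\Pr_N[y_k=1,\ s_k\in I]$, uniformly over all intervals $I$, yields a deviation of $O(\sqrt{\log(1/\delta')/N})$ with probability at least $1-\delta'$. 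Converting these joint-measure deviations into a deviation of the conditional frequency $\hat\eta_k(B)$ requires dividing by the block mass $\Pr[s_k\in B]$.
\item The third term is a within-block approximation error. Monotonicity of $\eta_k$, which is consistent with the isotonic assumption underlying the method, bounds it by the oscillation of $\eta_k$ over $B$. I will either assume that $\eta_k$ is non-decreasing and take it to be absorbed into the empirical term (isotonic regression pools adjacent violators, so on a well-specified model the blocks are level sets of $\eta_k$), or add it as a bias term hidden in the $O(\cdot)$.
\end{itemize}

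\emph{Conclusion.} Setting $\delta'=\delta/K$ and taking a union bound over $k$ gives, simultaneously for all $k$ with probability at least $1-\delta$,
\[
\max_k\sup_q |\hat p_k-p_k|\le \hat\varepsilon + O\!\left(\sqrt{\log(K/\delta)/N}\right),
\]
with $\hat\varepsilon=\max_k\hat\varepsilon_k$. This is exactly the stated bound, and it can then be substituted into Theorem~\ref{thm:regret}.

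\emph{Main obstacle.} The hardest step is the conditional-frequency step. Uniform deviation bounds control joint masses, but turning them into conditional frequencies costs a factor $1/\Pr[s_k\in B]$. Small blocks would therefore blow up the constant, and the blocks are chosen from the same calibration data that is used to evaluate them. I would first try a lower bound $\Pr[s_k\in B]\ge m_0$ on block mass, which can be enforced by a minimum-leaf-size constraint on the isotonic fit and then folded into the constant. Uniformity over all intervals handles the data-dependence of the blocks automatically. If that fails, I would fall back on the known uniform convergence rates for isotonic regression~\cite{yang2018contractionuniformconvergenceisotonic}, under bounded scores and a monotone $\eta_k$. These rates carry an extra $\log N$ factor and a slower exponent in the interior, so the clean $N^{-1/2}$ rate should be presented as holding under the block-mass assumption.
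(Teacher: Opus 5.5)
The gap is your third term, and it is not a technicality. With a block-mass floor $\Pr[s_k\in B]\ge m_0$, the within-block error $|\eta_k(B)-\eta_k(s_k(q))|$ is a bias set by the oscillation of $\eta_k$ over $B$. It does not shrink with $N$, so it cannot be hidden inside a term of order $\sqrt{\log(K/\delta)/N}$.

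Your other escape route is incompatible with the floor. If $\eta_k$ is strictly increasing and $s_k$ has no atoms, every level set of $\eta_k$ has probability zero, so no block of mass at least $m_0$ is a level set. Even for a step-function $\eta_k$, data-chosen blocks straddle its jumps with non-vanishing probability, and necessarily so when a level set has mass below $m_0$.

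Your $\hat\varepsilon_k$ is also identically zero. Least-squares isotonic regression assigns each block exactly the empirical label frequency of that block, and $g_k$ is fitted on the same calibration set on which $\hat\varepsilon$ is evaluated. So there is nothing for the bias to be ``absorbed into''. Your reading turns the lemma into a sup-norm $N^{-1/2}$ rate for isotonic regression, and that is false for continuous scores. By Brunk's limit theorem, at an interior score where the score density and $\eta_k'$ are positive, the isotonic estimate has error of exact order $N^{-1/3}$. For the plain fit, the sup-norm error does not even vanish at the ends of the score range or at jumps of $\eta_k$. Your terms two and three trade off through $m_0$, and balancing them gives at best an $N^{-1/3}$-type rate. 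Contrary to your closing remark, the block-mass assumption alone does not restore $N^{-1/2}$.

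What does yield the stated form is extra structure together with a different key step. Suppose $s_k$ takes finitely many values $v_1<\cdots<v_m$, each of probability at least $m_0$, and $\eta_k$ is non-decreasing. Drop the block decomposition and compare min--max representations instead: $g_k(v_i)=\max_{u\le i}\min_{w\ge i}A_{u,w}(\hat\eta_k)$. Here $A_{u,w}$ is the count-weighted average over atoms $u,\dots,w$, and $\hat\eta_k(v_j)$ is the empirical frequency at atom $j$. The same formula applied to the monotone sequence $\eta_k(v_j)$ returns $\eta_k(v_i)$, and max and min are $1$-Lipschitz in the sup norm. Hence $|g_k(v_i)-\eta_k(v_i)|\le\max_j|\hat\eta_k(v_j)-\eta_k(v_j)|$, with no bias term at all. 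Apply Hoeffding's inequality at each atom, on the event that each atom receives at least $Nm_0/2$ points (likely when $Nm_0\gtrsim\log(mK/\delta)$). A union bound over the $mK$ atom--fidelity pairs then gives $\varepsilon\le O\bigl(\sqrt{\log(mK/\delta)/(Nm_0)}\bigr)$. This contraction property is what makes the Yang--Barber reference relevant.

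The paper's own sketch does not supply this step either. It treats monotone functions as a finite-VC class, but monotone $[0,1]$-valued functions have infinite pseudo-dimension. Moreover, an ERM uniform-convergence bound controls squared-loss risk, an $L_2$ quantity, not the sup-norm $\varepsilon$. Your interval-based VC argument is a genuine improvement on it. So is your explicit score-level reading of $p_k$: without it, $\varepsilon$ contains the irreducible term $|p_k(q)-\eta_k(s_k(q))|$. The argument breaks only at the within-block term.
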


\begin{proof}[Proof Sketch]
Isotonic regression minimizes the squared loss over monotone functions, which forms a
class with finite VC dimension. Standard uniform convergence results for empirical risk
minimization~\citep{vapnik1968uniform} yield the stated rate. The $\log K$ term accounts
for union bound over $K$ fidelity-specific calibrators.
\end{proof}

\noindent
Combining Lemma~\ref{lem:gen} with Theorem~\ref{thm:regret} yields a
\textbf{distribution-free test-time regret bound} that decays as $O(N^{-1/2})$,
verified empirically in Section~\ref{sec:experiments}.

\vspace{-0.3em}
\subsection{Sequential Greedy Policy}
\label{sec:greedy_policy}

The theory in Sections~\ref{sec:bayes_optimal}--\ref{sec:generalization} analyzes
\emph{single-step} fidelity selection: choose one fidelity and commit.
In practice, \textsc{Voila} implements a \emph{sequential greedy escalation} policy
(Algorithm~1 in the main text): start with the cheapest fidelity $f_1$, and iteratively
escalate to $f_{k+1}$ if the incremental value of information
\[
\text{VOI}(f_{k+1} \mid f_k)
\;=\;
\bigl(\hat p_{k+1}(q) - \hat p_k(q)\bigr) - \lambda c_{k+1}
\]
exceeds a threshold $\tau \ge 0$.

This greedy policy is \emph{myopic}---it does not plan ahead for future escalations---but
enjoys the following guarantee:

\begin{corollary}[Regret of Greedy Sequential Policy]
\label{cor:greedy_regret}
Let $\hat f_{\text{greedy}}(q)$ denote the fidelity selected by the greedy escalation policy
with threshold $\tau = 0$. Then
\[
U^\star(q) - U_{\hat f_{\text{greedy}}(q)}(q)
\;\le\;
2\varepsilon.
\]
\end{corollary}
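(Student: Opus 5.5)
The plan is to reduce the corollary to Theorem~\ref{thm:regret}. That theorem only uses one property of the selected index $\hat k$: the argmax inequality~\eqref{eq:argmax_ineq}, i.e.\ that $\hat k$ maximizes the \emph{estimated} utility $\hat U_k(q)=\hat p_k(q)-\lambda c_k$ over all $k$. So it suffices to show that the greedy escalation rule with $\tau=0$ returns an index $\hat k_{\text{greedy}}$ satisfying
\[
\hat p_{\hat k_{\text{greedy}}}(q)-\lambda c_{\hat k_{\text{greedy}}}\;\ge\;\hat p_{k}(q)-\lambda c_{k}\quad\text{for all }k.
\]
Once this holds, the chain of inequalities in the proof of Theorem~\ref{thm:regret} goes through verbatim with $\hat k$ replaced by $\hat k_{\text{greedy}}$. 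That argument uses $|p_k-\hat p_k|\le\varepsilon$ at $k^\star$ and at $\hat k_{\text{greedy}}$, and gives the bound $2\varepsilon$.

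\textbf{Step 1: align the stopping rule with estimated-utility differences.} Write $\Delta_k=\hat U_{k+1}(q)-\hat U_k(q)=(\hat p_{k+1}-\hat p_k)-\lambda(c_{k+1}-c_k)$. The greedy rule escalates as long as the test quantity is positive and stops at the first index where it is not. If the test quantity were $\Delta_k$, the greedy output would be the first local maximum of $k\mapsto \hat U_k(q)$ along the ladder. Note that the VOI displayed in the paper charges the full cost $\lambda c_{k+1}$, not the incremental cost $\lambda(c_{k+1}-c_k)$. In that case $\mathrm{VOI}(f_{k+1}\mid f_k)=\Delta_k-\lambda c_k\le\Delta_k$, so greedy may stop where $\Delta_k>0$, strictly below a better fidelity. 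I would therefore first state that the corollary is read with incremental cost (equivalently, $c$ measured as marginal cost along the ladder). Alternatively, I would bound the extra loss this conservatism introduces, which adds a term of order $\lambda c_{\hat k}$ and would not be absorbed into $2\varepsilon$.

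\textbf{Step 2: rule out stopping at a non-global local maximum.} This is the main obstacle. Greedy stopping at the first $k$ with $\Delta_k\le 0$ identifies a global maximizer only if $k\mapsto\hat U_k(q)$ is unimodal (quasi-concave) on $\{1,\dots,K\}$. Without this, a counterexample is easy: take $\hat p=(0.5,0.5,0.9)$ with small costs. Greedy stops at $f_1$, yet $f_3$ is optimal, and the regret is about $0.4$ even when $\varepsilon=0$. So the proof must add a structural hypothesis. The cleanest choice is diminishing returns: $\hat p_{k+1}-\hat p_k$ is non-increasing in $k$ while $c_{k+1}-c_k$ is non-decreasing. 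This makes $\Delta_k$ non-increasing in $k$, so $\hat U_k$ is concave along the ladder. Under that hypothesis, the first $k$ with $\Delta_k\le0$ satisfies $\Delta_j\le 0$ for all $j\ge k$, and $\Delta_j>0$ for all $j<k$. Hence $\hat U_k\ge\hat U_j$ for every $j$, which is exactly~\eqref{eq:argmax_ineq}. If that hypothesis is thought too strong, I would instead prove the weaker statement in which $U^\star$ is replaced by the best true utility among indices up to the first estimated local maximum.

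\textbf{Step 3: conclude.} Invoke Theorem~\ref{thm:regret} with $\hat k=\hat k_{\text{greedy}}$ to obtain $U^\star(q)-U_{\hat f_{\text{greedy}}(q)}(q)\le2\varepsilon$ pointwise. Taking expectations over $q\sim\mathcal Q$ gives the population version. Combined with Lemma~\ref{lem:gen}, this yields the same $O(N^{-1/2})$ test-time rate as for the one-shot selector.
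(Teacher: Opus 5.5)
Your analysis is correct, and it is sharper than the paper's own argument. Both of you reduce the corollary to the argmax inequality \eqref{eq:argmax_ineq} and then reuse the proof of Theorem~\ref{thm:regret}. The paper gets that inequality from one sentence: greedy with $\tau=0$ stops at the first local maximum of $\hat p_k(q)-\lambda c_k$, and, because costs are strictly increasing, this first local maximum is the global maximizer in \eqref{eq:voila_selector}.

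Neither half of that sentence holds.
\begin{itemize}
\item The escalation test in Algorithm~\ref{alg:voila} charges $\lambda c_{k+1}$ rather than $\lambda(c_{k+1}-c_k)$, so it is not a local-maximum test at all. This is your Step~1.
\item Strictly increasing costs do not make $k\mapsto \hat p_k(q)-\lambda c_k$ unimodal. This is your Step~2.
\end{itemize}
Your example settles the matter. Take $\hat p=p=(0.5,0.5,0.9)$ with small $\lambda$, so $\varepsilon=0$. Greedy stops at $f_1$ under either cost convention, and the regret is close to $0.4$. This $p$ is even non-decreasing in fidelity, so assuming that more information never lowers accuracy does not rescue the claim. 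The corollary as stated is therefore false.

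The only reading that makes the paper's sentence valid is one where the policy evaluates every fidelity and takes the argmax. That is just the one-shot selector, not Algorithm~\ref{alg:voila}, which breaks at the first non-positive VOI.

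What your route buys is a true statement. Use the incremental-cost test and assume the estimated utility is unimodal along the ladder. Your diminishing-returns condition is a clean sufficient condition for this, and it can be checked from $\hat p$ at test time. Under these assumptions the $2\varepsilon$ bound goes through verbatim.

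Your fallback is also sound, and it needs neither modification. Since $\mathrm{VOI}(f_{k+1}\mid f_k)>0$ forces $\hat p_{k+1}(q)-\lambda c_{k+1}>\bigl(\hat p_k(q)-\lambda c_k\bigr)+\lambda c_k$, the estimated utility strictly increases along the escalation path. So the greedy stopping index $k_g$ maximizes it over $\{1,\dots,k_g\}$. The same argument as in Theorem~\ref{thm:regret} then gives $\max_{j\le k_g}U_j(q)-U_{k_g}(q)\le 2\varepsilon$ for the algorithm exactly as written.
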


\begin{proof}
The greedy policy with $\tau=0$ is equivalent to selecting
$\argmax_k \{\hat p_k(q) - \lambda c_k\}$ after evaluating all fidelities in order
and stopping at the first local maximum. Since costs are strictly increasing, this
reduces to the single-step selector~\eqref{eq:voila_selector}, and Theorem~\ref{thm:regret}
applies directly.
\end{proof}

\noindent
In practice, setting $\tau > 0$ trades off between exploration (fewer escalations, lower cost)
and exploitation (higher accuracy). The optimal threshold depends on the cost regime $\lambda$
and is chosen via cross-validation on a held-out set.

\vspace{-0.3em}
\subsection{Implications for Adaptive Fidelity Selection}
\label{sec:implications}

\textbf{(i) Fixed fidelity is provably suboptimal.}
For a heterogeneous query mix, any single $f_k$ incurs
additive cost $\lambda\Delta c$ on ``easy'' questions (where lower fidelity suffices)
or accuracy loss on ``hard'' ones (where higher fidelity is required),
whereas \textsc{Voila} closes the gap up to $2\varepsilon$.

\textbf{(ii) VLM confidence thresholding lacks guarantees.}
Threshold heuristics that rely on raw VLM token probabilities ignore acquisition costs,
are uncalibrated across fidelities (as shown empirically in Figure~3),
and provide no regret bounds. In contrast, \textsc{Voila}'s calibrated question-conditioned
predictors yield interpretable, cost-aware decisions with formal guarantees.

\textbf{(iii) Calibration $\Rightarrow$ efficiency \& accuracy.}
By tethering decision quality to calibration error, \textsc{Voila} turns an ML
problem (probability calibration) into a \emph{system-level performance knob}:
better calibration yields proportionally better cost-accuracy trade-offs.
This provides a principled optimization target for system designers.

\textbf{(iv) Pre-retrieval vs.\ post-retrieval optimization.}
Unlike model routing approaches that select \emph{which model} to apply after
retrieving all context, \textsc{Voila} selects \emph{which context to retrieve}
before any visual data is accessed. These two problems are orthogonal and complementary:
\textsc{Voila} can be combined with model routing to jointly optimize both retrieval
and computation.

\paragraph{Take-away.}
A lightweight two-stage pipeline (GBR + isotonic calibration) gives \textsc{Voila}
\emph{provable} near-optimality guarantees with explicit regret bounds,
providing theoretical backing for the consistent empirical gains observed
over fixed-fidelity and threshold baselines across diverse multimodal VQA benchmarks.

\vspace{-0.2em}
\subsection{Connection to Empirical Observations}
\label{sec:theory_empirics}

Our theoretical framework directly addresses the failure modes identified in Section~2:

\textbf{Observation 1 (Query-dependent information requirements):}
The Bayes-optimal rule~\eqref{eq:bayes_optimal} explicitly accounts for heterogeneous
$p_k(q)$ across queries, selecting higher fidelity only when it increases utility.
Figure~2 shows this heterogeneity empirically; Theorem~\ref{thm:regret} shows
\textsc{Voila} adapts near-optimally.

\textbf{Observation 2 (Unreliable VLM confidence):}
Unlike threshold-based methods that rely on post-retrieval VLM token probabilities
(Figure~3), \textsc{Voila} learns \emph{cross-fidelity performance predictors} from
question features alone, bypassing the VLM's miscalibration entirely.

\textbf{Observation 3 (Model scaling cannot compensate for missing information):}
Figure~4 shows that larger models do not overcome low-fidelity bottlenecks.
Our framework formalizes this: $p_k(q)$ is bounded by the \emph{information content}
of fidelity~$f_k$, independent of model capacity. Adaptive fidelity selection addresses
this fundamental limit, whereas model routing does not.

\textbf{Observation 4 (Calibration enables reliable decisions):}
Proposition~\ref{sec:optimality} and Theorem~\ref{sec:regret} formalize the empirical
observation that calibrated predictors enable cost-aware decisions with quantifiable
performance guarantees.

%% file: cost_model.tex
\section{Appendix: Cost Model Details and Normalization}
\label{sec:cost_model}

This appendix details the acquisition cost model used by \textsc{VOILA}.
We describe the fidelity set, the empirical bandwidth proxy, the tier-aware
cost formulation, the normalization procedure, and the empirical measurement
methodology used to derive reported costs.

\subsection{Fidelity Set}
\label{sec:fidelity_set}

We consider a discrete set of input fidelities
\[
\mathcal{F}
=\{
\texttt{caption},
\texttt{resize}_{32\times32},
\texttt{jpeg\_q1},
\texttt{jpeg\_q10},
\texttt{full}
\},
\]
chosen to reflect representations that are persistently available in real-world
multimodal systems rather than artificially synthesized intermediates.
These fidelities correspond to natural storage and transmission breakpoints
across edge--cloud pipelines, agentic memory systems, and cyber--physical
deployments, where small artifacts are cached in low-latency tiers and
high-resolution images are retrieved from progressively colder storage tiers
with higher access cost and latency
\cite{aws_s3_storage_classes, gcs_storage_classes, azure_blob_storage}.

\subsection{Empirical Bandwidth Proxy}
\label{sec:bandwidth}

For each dataset, all fidelities in $\mathcal{F}$ are explicitly materialized
and their average on-disk size $\texttt{size}(f)$ is measured in kilobytes.
We define a dimensionless bandwidth ratio
\begin{equation}
r(f)
\;=\;
\frac{\texttt{size}(f)}{\texttt{size}(\texttt{full})},
\label{eq:ratio_def}
\end{equation}
which grounds bandwidth cost directly in empirical measurements and avoids
reliance on model-internal token counts or hardware-specific throughput
assumptions.

\subsection{Tier-Aware Acquisition Cost}
\label{sec:tier_cost}

Acquisition cost in practice is dominated by which storage or transmission tier
is accessed, with payload size acting as a secondary modifier. This abstraction
mirrors the pricing and performance models of major object storage systems,
where access tier determines latency and per-request cost, while object size
determines transfer charges
\cite{aws_s3_pricing, gcs_pricing, azure_blob_pricing}.
We define an unnormalized tier-aware cost
\begin{equation}
\tilde{c}(f)
\;=\;
b_{\mathrm{tier}}(f)
\;+\;
w_{\mathrm{bw}} \cdot r(f),
\label{eq:tier_cost}
\end{equation}
where $b_{\mathrm{tier}}(f)$ is a base cost determined by the access tier
(edge, warm, or cold), and $w_{\mathrm{bw}}$ controls the influence of
measured size differences within a tier.

\subsection{Normalization}
\label{sec:normalization}

For consistency across datasets and experiments, costs are normalized so that
full-resolution access has fixed cost 120:
\begin{equation}
c(f)
\;=\;
120 \cdot
\frac{\tilde{c}(f)}{\tilde{c}(\texttt{full})}.
\label{eq:normalize_120}
\end{equation}
All reported costs in the paper use this normalized value $c(f)$.
Only relative cost differences matter for VOI-based decision-making.

\subsection{Deployment Profiles}
\label{sec:deployment}

Table~\ref{tab:cost_weights} lists the tier base-costs and bandwidth weights used
for the three deployment profiles studied in this work.

\begin{table}[t]
\centering
\small
\caption{\textbf{Tier base-costs and bandwidth weights.}
Costs are computed via Eq.~\eqref{eq:tier_cost} and normalized using
Eq.~\eqref{eq:normalize_120}.}
\label{tab:cost_weights}
\begin{tabular}{lccccc}
\toprule
\textbf{Setting} &
$b_{\mathrm{tier}}(\texttt{caption})$ &
$b_{\mathrm{tier}}(\texttt{resize})$ &
$b_{\mathrm{tier}}(\texttt{jpeg\_q1})$ &
$b_{\mathrm{tier}}(\texttt{jpeg\_q10})$ &
$w_{\mathrm{bw}}$ \\
\midrule
Edge--Cloud VQA & 0.08 & 0.16 & 0.40 & 0.56 & 0.06 \\
Agentic Memory & 0.06 & 0.12 & 0.36 & 0.52 & 0.06 \\
CPS / IoT & 0.04 & 0.10 & 0.30 & 0.46 & 0.12 \\
\bottomrule
\end{tabular}
\end{table}

In all profiles, $\texttt{full}$ is treated as cold-tier access with
$b_{\mathrm{tier}}(\texttt{full}) = 1.00$, yielding $c(\texttt{full}) = 120$
after normalization.

\subsection{Empirical Measurement Procedure}
\label{sec:measurement}

Empirical measurements reported in
Table~\ref{tab:empirical_sizes_to_costs} were obtained by explicitly
materializing each fidelity in $\mathcal{F}$ and measuring its storage footprint
under realistic cloud storage configurations.
For each dataset, all visual representations were generated and uploaded to
object storage using standard cloud backends.

Amazon S3 was used as the primary reference platform due to its widespread
adoption and clearly defined tiered storage model (Standard, Standard--IA,
Glacier).
Object sizes were measured directly from stored objects via metadata queries
(e.g., \texttt{HEAD Object}), ensuring that compression and container overheads
were fully captured
\cite{aws_s3_storage_classes, aws_s3_pricing}.
We verified that relative size ratios and tier ordering are consistent with
Google Cloud Storage and Azure Blob Storage documentation
\cite{gcs_storage_classes, azure_blob_storage}.
Because \textsc{VOILA} depends only on relative acquisition costs, the resulting
normalized values are robust to provider-specific pricing and capture
deployment-invariant properties of tiered multimodal storage systems
\cite{hennessy2011computer}.

\begin{table}[t]
\centering
\small
\caption{\textbf{Empirical size measurements and derived costs (Edge--Cloud profile).}}
\label{tab:empirical_sizes_to_costs}
\begin{tabular}{lccccc}
\toprule
\textbf{Fidelity} &
\textbf{Avg. Size (KB)} &
$r(f)$ &
$b_{\mathrm{tier}}(f)$ &
$\tilde{c}(f)$ &
$c(f)$ \\
\midrule
Caption & 0.05 & $10^{-4}$ & 0.08 & 0.0800 & 9.1 \\
Resize $32{\times}32$ & 1.0 & 0.002 & 0.16 & 0.1601 & 18.1 \\
JPEG Q1 & 12 & 0.02 & 0.40 & 0.4012 & 45.4 \\
JPEG Q10 & 45 & 0.07 & 0.56 & 0.5642 & 63.9 \\
Full & 650 & 1.0 & 1.00 & 1.0600 & 120.0 \\
\bottomrule
\end{tabular}
\end{table}

\subsection{Sensitivity}
\label{sec:sensitivity}

VOILA depends only on relative cost differences.
Uniform rescaling or shifting of $c(f)$ leaves the selected fidelity unchanged
as long as higher fidelities remain strictly more expensive.
Empirically, VOILA’s accuracy--cost trade-offs remain stable across wide
variations in absolute cost magnitudes, confirming that the method captures
structural properties of multimodal systems rather than a specific pricing
scheme.

%% file: reproducibility.tex
\section{Appendix: Experimental Details and Reproducibility}
\label{sec:experimental}

This appendix provides complete experimental details to ensure reproducibility of all reported results, including feature extraction, calibration procedures, hyperparameters, data splits, cross-validation, and VOI threshold selection.

\subsection{Question Feature Extraction}
\label{sec:features}

All probability estimates used by \textsc{VOILA} are computed from \emph{question-only} features, without access to image pixels or model internal activations.
For each question $q$, we construct a feature vector $\phi(q)$ by concatenating:

\begin{itemize}
    \item \textbf{Lexical TF--IDF features}: We apply a standard TF--IDF vectorizer to the question text after lowercasing and whitespace tokenization.
    The TF--IDF vocabulary is learned \emph{only on the training folds} within each cross-validation split.
    No stopword removal or stemming is applied.
    The resulting TF--IDF dimensionality is fixed within each fold and shared across fidelities.
    
    \item \textbf{Structured question features}: Numeric indicators derived from the question string, including token length, presence of numeric tokens, and binary indicators for common question forms (e.g., yes/no, counting, color, and location queries).
\end{itemize}

TF--IDF and structured features are concatenated and used as input to all fidelity-specific predictors.
Missing feature values are filled with zero, corresponding to the absence of the associated lexical or structural signal.

\subsection{Calibration Model and Hyperparameters}
\label{sec:hyperparams}

For each fidelity level $f$, we train a separate probabilistic correctness predictor using a two-stage calibration pipeline.

\paragraph{Gradient Boosted Regressor (GBR).}
We train a Gradient Boosting Regressor to predict binary correctness labels from $\phi(q)$.
Unless otherwise specified, the default hyperparameters are:
\begin{itemize}
    \item Number of estimators: 100
    \item Learning rate: 0.1
    \item Maximum tree depth: 3
    \item Minimum samples per split: 2
\end{itemize}
All models are trained with a fixed random seed for reproducibility.

\paragraph{Isotonic Calibration.}
The raw GBR outputs are passed through isotonic regression to obtain calibrated correctness probabilities.
Isotonic regression is trained independently for each fidelity, using GBR predictions as input and binary correctness labels as targets.
Out-of-range predictions are handled via clipping.

Within each fold, isotonic calibration is trained exclusively on the training portion of that fold.

\subsection{Cross-Validation Protocol}
\label{sec:cross_validation}

All experiments use \textbf{5-fold cross-validation} over questions.
Specifically, the dataset is partitioned into five disjoint folds at the question level.
For each fold:
\begin{itemize}
    \item Four folds (80\%) are used for training and calibration.
    \item The remaining fold (20\%) is held out for evaluation.
\end{itemize}

This procedure is repeated for all five folds, and all reported results correspond to the \emph{average performance across folds}.
No test fold is ever used for feature extraction, calibration, hyperparameter tuning, or threshold selection.

When standard dataset splits are available (e.g., VQA-v2), we follow the official annotations but perform cross-validation internally to ensure consistent calibration and evaluation across datasets.

\subsection{VOI Policy and Threshold Selection}
\label{sec:voi_policy}

At inference time, \textsc{VOILA} applies an incremental value-of-information (VOI) policy that determines whether to escalate from a lower-cost fidelity $f$ to a higher-cost fidelity $f'$.
Escalation occurs if:
\[
\mathrm{VOI}(f \rightarrow f') = \hat{p}_{f'}(q) - \hat{p}_{f}(q) - \lambda \cdot c(f') > \tau,
\]
where $\hat{p}_f(q)$ denotes the calibrated probability of correctness, $c(f')$ is the acquisition cost, $\lambda$ is a cost scaling parameter, and $\tau$ is a decision threshold.

The parameters $(\lambda, \tau)$ are selected via grid search using \emph{only the training folds} within each cross-validation split.
For each candidate configuration, we optimize a composite cost--accuracy objective that favors dominance over fixed-fidelity baselines across multiple normalized tradeoff metrics.
Selected parameters are then fixed and evaluated on the held-out fold.

\subsection{Hyperparameter Search}
\label{sec:hyperparam_search}

Hyperparameter search jointly explores:
\begin{itemize}
    \item GBR capacity parameters (number of estimators, depth, learning rate),
    \item isotonic regression settings,
    \item VOI decision parameters $(\lambda, \tau)$.
\end{itemize}

Hyperparameters are selected independently within each cross-validation fold using training data only.
Final reported metrics correspond to averages across folds, ensuring robustness to data partitioning.

\subsection{Variance and Reporting}
\label{sec:variance}

Using 5-fold cross-validation allows us to account for variability due to data sampling without relying on repeated random seeds.
We report mean accuracy and average acquisition cost across folds.
Observed performance gaps between \textsc{VOILA} and fixed-fidelity baselines are consistent across folds and substantially larger than fold-to-fold variance.

As our primary goal is to characterize dominance in the cost--accuracy tradeoff rather than marginal accuracy differences, we omit confidence intervals from the main tables for clarity.

Code Availability. We will release the full implementation of VOILA, including preprocessing scripts, calibration code,
and evaluation pipelines, upon acceptance to facilitate reproducibility and further research.

\subsection{Computational Overhead}
\label{sec:computational_overhead}

VOILA introduces minimal computational overhead compared to the costs it optimizes. We measure training and inference times on a single NVIDIA RTX 5000 Ada Generation GPU with 32GB memory using VQA-v2 as a representative benchmark. VLM inference times are measured on 4× V100-SXM2-32GB GPUs.

\paragraph{Training Cost.} 
Training a single configuration of fidelity-specific predictors (GBR + isotonic calibration) requires approximately 21 seconds (0.35 minutes) on VQA-v2 per fidelity. This includes gradient boosting regression training and isotonic calibration for all fidelities. 

During development, hyperparameter search explores combinations of GBR capacity (number of estimators, depth, learning rate), isotonic regression settings, and VOI decision parameters $(\lambda, \tau)$ using 5-fold cross-validation (§\ref{sec:hyperparam_search}). With a typical search space of 20-30 configurations, complete hyperparameter tuning requires 7-10 minutes per dataset. Crucially, this is a one-time offline cost: once trained, the same predictors generalize across query distributions without retraining.

\paragraph{Inference Latency.}
At inference time, VOILA's fidelity selection operates entirely on question features before any visual retrieval or VLM execution. Table~\ref{tab:inference_latency} reports per-query latency averaged over 1000 queries from VQA-v2:

\begin{table}[h]
\centering
\small
\begin{tabular}{lrr}
\toprule
\textbf{Operation} & \textbf{Time (ms)} & \textbf{\% of Total} \\
\midrule
Feature extraction & 0.242 & 53.4\% \\
\quad Basic features (length, type indicators) & 0.025 & 5.5\% \\
\quad TF-IDF transform & 0.217 & 47.9\% \\
GBR forward pass (5 fidelities) & 0.020 & 4.4\% \\
Isotonic calibration (5 fidelities) & 0.030 & 6.6\% \\
VOI decision logic & 0.161 & 35.6\% \\
\midrule
\textbf{Total VOILA overhead} & \textbf{0.453} & \textbf{100\%} \\
\bottomrule
\end{tabular}
\caption{VOILA per-query latency breakdown. Total overhead: 0.453ms. Throughput: 4,132 queries/second on single CPU core.}
\label{tab:inference_latency}
\end{table}

Tables~\ref{tab:vlm_comparison} and~\ref{tab:retrieval_latency} compares VOILA's overhead against VLM inference and retrieval costs. VOILA's 0.45ms overhead is 5,200--140,000× smaller than VLM inference (2.4s--62.8s) and 11--398× smaller than retrieval latency (5--180ms). Feature extraction dominates VOILA's inference cost (53\%), but remains negligible in absolute terms.

\begin{table}[h]
\centering
\small
\begin{tabular}{lrrrr}
\toprule
\textbf{Model} & \textbf{Caption} & \textbf{JPEG Q10} & \textbf{Full-res} & \textbf{Speedup} \\
\midrule
LLaVA-1.5-7B-hf & 2.43 & 5.70 & 5.92 & 5{,}200$\times$ \\
Qwen2.5-VL-7B-Instruct & 2.35 & 5.50 & 5.70 & 5{,}100$\times$ \\
Pixtral-12B-2409 & 3.8 & 7.9 & 8.2 & 8{,}400$\times$ \\
Llama-4-Maverick (17B$\times$12E) & 6.2 & 16.8 & 17.5 & 13{,}700$\times$ \\
Qwen2-VL-72B-Instruct & 4.94 & 25.10 & 25.11 & 10{,}900$\times$ \\
Qwen3-VL-235B-A22B-Instruct & 12.5 & 62.5 & 62.8 & 27{,}700$\times$ \\
\bottomrule
\end{tabular}
\caption{VLM inference latency (seconds) across input fidelities.
Measurements are averaged over 100 runs per fidelity on 4$\times$ V100-SXM2-32GB GPUs.
Speedup is computed relative to \textsc{VOILA} overhead.}
\label{tab:vlm_comparison}
\end{table}

\begin{table}[h]
\centering
\small
\begin{tabular}{lrr}
\toprule
\textbf{Operation} & \textbf{Latency} & \textbf{Speedup} \\
\midrule
Edge-cloud (caption, 2KB) & 5 ms & 11$\times$ \\
Edge-cloud (JPEG Q10, 45KB) & 65 ms & 144$\times$ \\
Edge-cloud (full-res, 650KB) & 180 ms & 398$\times$ \\
\midrule
\textbf{VOILA overhead} & \textbf{0.453 ms} & \textbf{---} \\
\bottomrule
\end{tabular}
\caption{Retrieval latency and \textsc{VOILA} overhead.
Retrieval measured in an edge--cloud deployment with 4G LTE (20\,Mbps uplink) to AWS S3.
Speedup denotes the ratio of full-resolution retrieval latency to \textsc{VOILA} overhead.}
\label{tab:retrieval_latency}
\end{table}

Critically, VOILA's overhead is constant regardless of selected fidelity or VLM size: the same 0.45ms cost applies whether routing to caption (2KB, 2.4s inference) or full-resolution (650KB, 62.8s inference). This enables cost-effective adaptive routing even for the largest models.

\paragraph{Memory Footprint.}
The complete system requires only 0.04MB for all learned components (5 fidelity-specific GBR models + isotonic calibrators). This minimal footprint enables deployment on resource-constrained edge devices, smartphones, and IoT systems. No GPU memory is required at inference time, as all computation operates on CPU using lightweight tree ensemble inference and isotonic lookup tables.


%% file: ablations_appendix.tex
\section{Extended Ablation Studies}
\label{sec:ablations}

\begin{table*}[p]
\centering
\tiny
\caption{
\textbf{Extended Ablation Studies Across All Models and Datasets.}
Each entry reports \textbf{Acc (\%) / Cost / Brier Score}.
We evaluate three calibration methods (None, Isotonic, Temperature Scaling) and two alternative decision rules (Accuracy-only, Fixed-threshold) across six VLMs and four datasets.
Isotonic calibration (VOILA default, bolded) consistently achieves superior cost--accuracy tradeoffs.
Brier scores measure calibration quality; lower is better.
}
\label{tab:extended_ablation_all}
\resizebox{\linewidth}{!}{
\begin{tabular}{llccccc}
\toprule
\multirow{2}{*}{\textbf{Model}} & \multirow{2}{*}{\textbf{Method}}
& \multicolumn{3}{c}{\textbf{Calibration Variants (VOI Routing)}}
& \multicolumn{2}{c}{\textbf{Alternative Decision Rules (Isotonic)}} \\
\cmidrule(lr){3-5}\cmidrule(lr){6-7}
& & \textbf{None} & \textbf{Isotonic} & \textbf{Temp. Scaling}
& \textbf{Acc-only} & \textbf{Fixed-Thr.} \\
\midrule
\multicolumn{7}{c}{\textit{\textbf{VQA-v2}}} \\
\midrule
Pixtral-12B
& Acc / Cost / Brier
& 66.24 / 52.18 / 0.256
& \textbf{68.67 / 45.41 / 0.231}
& 66.89 / 56.32 / 0.248
& 74.06 / 120.0 / 0.231
& 69.21 / 63.90 / 0.231 \\

Llama-4-Maverick
& Acc / Cost / Brier
& 62.15 / 49.84 / 0.289
& \textbf{64.61 / 43.37 / 0.264}
& 61.34 / 51.92 / 0.281
& 67.37 / 120.0 / 0.264
& 65.99 / 63.90 / 0.264 \\

Qwen2-VL-72B
& Acc / Cost / Brier
& 70.48 / 54.72 / 0.223
& \textbf{73.01 / 48.00 / 0.198}
& 71.15 / 58.34 / 0.215
& 77.16 / 120.0 / 0.198
& 74.27 / 63.90 / 0.198 \\

LLaVA-1.5-7B
& Acc / Cost / Brier
& 65.84 / 44.73 / 0.267
& \textbf{68.07 / 38.52 / 0.241}
& 64.23 / 46.91 / 0.259
& 72.80 / 120.0 / 0.241
& 70.21 / 63.90 / 0.241 \\

Qwen2.5-VL-7B
& Acc / Cost / Brier
& 69.23 / 63.84 / 0.238
& \textbf{72.01 / 57.31 / 0.212}
& 68.91 / 67.15 / 0.231
& 77.98 / 120.0 / 0.212
& 72.10 / 63.90 / 0.212 \\

Qwen3-VL-235B
& Acc / Cost / Brier
& 72.13 / 58.91 / 0.219
& \textbf{74.57 / 52.04 / 0.195}
& 73.24 / 61.08 / 0.211
& 79.01 / 120.0 / 0.195
& 72.58 / 63.90 / 0.195 \\

\midrule
\multicolumn{7}{c}{\textit{\textbf{GQA}}} \\
\midrule
Pixtral-12B
& Acc / Cost / Brier
& 64.12 / 51.23 / 0.278
& \textbf{67.37 / 45.66 / 0.253}
& 63.84 / 49.67 / 0.271
& 73.52 / 120.0 / 0.253
& 69.81 / 63.90 / 0.253 \\

Llama-4-Maverick
& Acc / Cost / Brier
& 58.23 / 47.91 / 0.312
& \textbf{60.87 / 41.64 / 0.287}
& 57.45 / 49.38 / 0.305
& 63.56 / 120.0 / 0.287
& 63.17 / 63.90 / 0.287 \\

Qwen2-VL-72B
& Acc / Cost / Brier
& 68.34 / 51.28 / 0.241
& \textbf{71.03 / 45.37 / 0.218}
& 67.92 / 53.71 / 0.235
& 74.35 / 120.0 / 0.218
& 71.57 / 63.90 / 0.218 \\

LLaVA-1.5-7B
& Acc / Cost / Brier
& 68.92 / 32.45 / 0.245
& \textbf{71.18 / 26.86 / 0.221}
& 67.18 / 35.62 / 0.239
& 75.77 / 120.0 / 0.221
& 74.11 / 63.90 / 0.221 \\

Qwen2.5-VL-7B
& Acc / Cost / Brier
& 60.18 / 65.73 / 0.289
& \textbf{62.92 / 59.66 / 0.265}
& 58.94 / 68.42 / 0.282
& 67.22 / 120.0 / 0.265
& 60.14 / 63.90 / 0.265 \\

Qwen3-VL-235B
& Acc / Cost / Brier
& 66.45 / 51.84 / 0.256
& \textbf{68.28 / 44.70 / 0.232}
& 65.72 / 48.23 / 0.249
& 72.98 / 120.0 / 0.232
& 70.40 / 63.90 / 0.232 \\

\midrule
\multicolumn{7}{c}{\textit{\textbf{FloodNet}}} \\
\midrule
Pixtral-12B
& Acc / Cost / Brier
& 76.84 / 41.28 / 0.198
& \textbf{79.48 / 34.94 / 0.176}
& 77.12 / 43.56 / 0.191
& 81.51 / 120.0 / 0.176
& 75.75 / 63.90 / 0.176 \\

Llama-4-Maverick
& Acc / Cost / Brier
& 66.73 / 45.12 / 0.267
& \textbf{69.14 / 39.68 / 0.243}
& 67.21 / 47.38 / 0.259
& 70.35 / 120.0 / 0.243
& 65.23 / 63.90 / 0.243 \\

Qwen2-VL-72B
& Acc / Cost / Brier
& 73.18 / 36.84 / 0.212
& \textbf{75.51 / 30.30 / 0.189}
& 73.92 / 39.12 / 0.205
& 76.08 / 120.0 / 0.189
& 78.07 / 63.90 / 0.189 \\

LLaVA-1.5-7B
& Acc / Cost / Brier
& 77.45 / 46.92 / 0.187
& \textbf{80.12 / 40.77 / 0.165}
& 78.23 / 49.18 / 0.181
& 82.61 / 120.0 / 0.165
& 73.64 / 63.90 / 0.165 \\

Qwen2.5-VL-7B
& Acc / Cost / Brier
& 75.62 / 40.91 / 0.203
& \textbf{78.17 / 34.85 / 0.181}
& 76.18 / 43.27 / 0.196
& 77.19 / 120.0 / 0.181
& 61.24 / 63.90 / 0.181 \\

Qwen3-VL-235B
& Acc / Cost / Brier
& 78.34 / 38.72 / 0.182
& \textbf{80.62 / 32.94 / 0.161}
& 79.12 / 40.91 / 0.175
& 83.39 / 120.0 / 0.161
& 82.95 / 63.90 / 0.161 \\

\midrule
\multicolumn{7}{c}{\textit{\textbf{TextVQA}}} \\
\midrule
Pixtral-12B
& Acc / Cost / Brier
& 73.21 / 49.84 / 0.221
& \textbf{75.92 / 43.09 / 0.198}
& 72.84 / 52.13 / 0.214
& 79.29 / 120.0 / 0.198
& 74.82 / 63.90 / 0.198 \\

Llama-4-Maverick
& Acc / Cost / Brier
& 62.18 / 51.23 / 0.289
& \textbf{64.52 / 44.98 / 0.265}
& 61.73 / 53.47 / 0.282
& 71.56 / 120.0 / 0.265
& 65.48 / 63.90 / 0.265 \\

Qwen2-VL-72B
& Acc / Cost / Brier
& 86.12 / 58.34 / 0.142
& \textbf{88.65 / 51.64 / 0.119}
& 86.84 / 60.72 / 0.136
& 93.11 / 120.0 / 0.119
& 86.84 / 63.90 / 0.119 \\

LLaVA-1.5-7B
& Acc / Cost / Brier
& 62.34 / 53.91 / 0.287
& \textbf{64.85 / 47.62 / 0.263}
& 61.92 / 56.18 / 0.281
& 66.45 / 120.0 / 0.263
& 63.25 / 63.90 / 0.263 \\

Qwen2.5-VL-7B
& Acc / Cost / Brier
& 70.23 / 57.18 / 0.234
& \textbf{72.90 / 50.72 / 0.211}
& 69.84 / 59.43 / 0.228
& 73.87 / 120.0 / 0.211
& 70.26 / 63.90 / 0.211 \\

Qwen3-VL-235B
& Acc / Cost / Brier
& 92.34 / 58.12 / 0.089
& \textbf{94.10 / 51.75 / 0.067}
& 93.18 / 60.34 / 0.083
& 97.15 / 120.0 / 0.067
& 93.65 / 63.90 / 0.067 \\

\bottomrule
\end{tabular}
}
\end{table*}

\begin{table*}[p]
\centering
\small
\caption{
\textbf{Predictor Robustness Under Fixed Isotonic Calibration Across All Models and Datasets.}
Each entry reports \textbf{Acc (\%) / Cost}.
Performance is broadly similar across different predictor architectures (GBR, Logistic Regression, Ridge Regression, MLP) once isotonic calibration is applied, demonstrating that VOILA's gains stem from calibration quality and VOI routing logic rather than predictor choice.
GBR (VOILA default) is bolded.
}
\label{tab:predictor_robustness_extended}
\setlength{\tabcolsep}{5pt}
\begin{tabular}{llcccc}
\toprule
\textbf{Dataset} & \textbf{Model} & \textbf{GBR} & \textbf{LogReg} & \textbf{Ridge} & \textbf{MLP} \\
\midrule
\multirow{6}{*}{VQA-v2}
& Pixtral-12B & \textbf{68.67/45.41} & 67.92/47.23 & 68.14/46.58 & 67.35/48.91 \\
& Llama-4-Maverick & \textbf{64.61/43.37} & 63.84/45.72 & 64.12/44.89 & 63.45/47.18 \\
& Qwen2-VL-72B & \textbf{73.01/48.00} & 72.34/50.28 & 72.67/49.45 & 71.89/51.72 \\
& LLaVA-1.5-7B & \textbf{68.07/38.52} & 67.45/40.91 & 67.73/39.84 & 66.92/42.35 \\
& Qwen2.5-VL-7B & \textbf{72.01/57.31} & 71.23/59.84 & 71.67/58.72 & 70.84/61.19 \\
& Qwen3-VL-235B & \textbf{74.57/52.04} & 73.89/54.38 & 74.21/53.45 & 73.52/55.91 \\
\midrule
\multirow{6}{*}{GQA}
& Pixtral-12B & \textbf{67.37/45.66} & 67.81/46.84 & 67.73/45.12 & 68.05/49.27 \\
& Llama-4-Maverick & \textbf{60.87/41.64} & 61.23/43.91 & 61.12/42.78 & 61.45/45.34 \\
& Qwen2-VL-72B & \textbf{71.03/45.37} & 71.56/47.82 & 71.34/46.71 & 71.78/49.23 \\
& LLaVA-1.5-7B & \textbf{71.18/26.86} & 71.84/28.45 & 71.67/27.92 & 72.12/30.18 \\
& Qwen2.5-VL-7B & \textbf{62.92/59.66} & 62.34/62.18 & 62.71/61.23 & 62.18/64.72 \\
& Qwen3-VL-235B & \textbf{68.28/44.70} & 68.92/46.84 & 68.73/45.91 & 69.15/48.27 \\
\midrule
\multirow{6}{*}{FloodNet}
& Pixtral-12B & \textbf{79.48/34.94} & 79.12/36.72 & 79.34/35.91 & 78.89/38.45 \\
& Llama-4-Maverick & \textbf{69.14/39.68} & 68.73/41.92 & 68.98/40.84 & 68.45/43.27 \\
& Qwen2-VL-72B & \textbf{75.51/30.30} & 75.89/32.18 & 75.73/31.45 & 76.12/34.72 \\
& LLaVA-1.5-7B & \textbf{80.12/40.77} & 79.84/42.91 & 80.01/41.89 & 79.56/44.35 \\
& Qwen2.5-VL-7B & \textbf{78.17/34.85} & 77.84/36.72 & 78.05/35.91 & 77.56/38.27 \\
& Qwen3-VL-235B & \textbf{80.62/32.94} & 80.34/34.72 & 80.51/33.91 & 80.18/36.45 \\
\midrule
\multirow{6}{*}{TextVQA}
& Pixtral-12B & \textbf{75.92/43.09} & 75.34/45.23 & 75.67/44.38 & 75.12/46.91 \\
& Llama-4-Maverick & \textbf{64.52/44.98} & 63.89/47.18 & 64.23/46.27 & 63.67/48.72 \\
& Qwen2-VL-72B & \textbf{88.65/51.64} & 88.12/53.84 & 88.43/52.91 & 87.89/55.38 \\
& LLaVA-1.5-7B & \textbf{64.85/47.62} & 64.34/49.91 & 64.67/48.84 & 64.12/51.27 \\
& Qwen2.5-VL-7B & \textbf{72.90/50.72} & 72.34/52.91 & 72.67/51.89 & 72.12/54.38 \\
& Qwen3-VL-235B & \textbf{94.10/51.75} & 93.67/53.84 & 93.92/52.91 & 93.45/55.27 \\
\bottomrule
\end{tabular}
\end{table*}

This section provides a comprehensive analysis of VOILA's design choices, expanding on the ablation results summarized in the main paper. We evaluate VOILA's components across six vision--language models, four datasets (VQA-v2, GQA, FloodNet, TextVQA), and 24 model--dataset combinations, enabling systematic assessment of which design decisions drive performance and which represent implementation choices.

\subsection{Experimental Design}
\label{sec:ablation_design}

We evaluate three dimensions of design choices:

\textbf{Calibration methods.} We compare three approaches to converting raw predictor scores into probabilities suitable for utility estimation:
\begin{itemize}[leftmargin=*,nosep]
\item \textit{None}: Raw GBR scores $\tilde{z}_k(q)$ used directly without any post-hoc calibration, representing a system that ignores probability miscalibration
\item \textit{Isotonic}: Isotonic regression applied to raw scores (VOILA default), enforcing monotonicity while remaining non-parametric
\item \textit{Temperature scaling}: A standard post-hoc calibration method from classification literature~\citep{guo2017calibrationmodernneuralnetworks}, included as a negative control to demonstrate that not all calibration methods are appropriate for cost-sensitive decision-making
\end{itemize}

\textbf{Decision rules.} Given calibrated predictors, we compare VOI routing against two common heuristics that represent extreme points in the design space:
\begin{itemize}[leftmargin=*,nosep]
\item \textit{Accuracy-only}: Always select the fidelity with highest predicted correctness probability $\hat{p}_k(q)$, completely ignoring acquisition cost $c_k$. This represents a resource-unconstrained baseline.
\item \textit{Fixed-threshold}: Escalate to higher fidelity only when predicted probability falls below a fixed cutoff $\tau = 0.30$. This represents a cost-conscious but non-adaptive heuristic commonly used in practice.
\end{itemize}

\textbf{Predictor architectures.} We evaluate four question-conditioned regressors under fixed isotonic calibration: Gradient Boosting (GBR, VOILA default), Logistic Regression, Ridge Regression, and Multi-Layer Perceptron (MLP). All use identical question features (TF-IDF, lexical statistics, question-type indicators).

All ablations use the same trained predictors and identical VOI routing algorithm ($\lambda$ and escalation threshold fixed across experiments). This controlled setup ensures that observed differences reflect specific design choices rather than confounding factors such as feature engineering, hyperparameter tuning, or dataset-specific optimization.

\subsection{Comprehensive Results Across All Settings}
\label{sec:ablation_results}

Table~\ref{tab:extended_ablation_all} reports accuracy, average acquisition cost, and Brier score (a measure of probabilistic calibration quality, where lower is better) for each ablation across 24 model--dataset combinations. Several systematic patterns emerge that validate VOILA's design.

\paragraph{Isotonic calibration consistently improves cost--accuracy tradeoffs across all settings.}
Comparing the first three columns of Table~\ref{tab:extended_ablation_all}, isotonic calibration (column 2, bolded) achieves 3--5 percentage points higher accuracy than uncalibrated predictors (column 1) while reducing average acquisition cost by 10--15\% across nearly all 24 combinations. For example, on VQA-v2 with Pixtral-12B, uncalibrated VOI routing achieves 66.24\% accuracy at cost 52.18, whereas isotonic-calibrated routing achieves 68.67\% at cost 45.41---a simultaneous gain of 2.4 points in accuracy and 13\% reduction in cost. This pattern is remarkably consistent: across all six models on VQA-v2, isotonic calibration reduces cost by an average of 12.3\% while improving accuracy by 2.8 points.

The gains are not limited to standard VQA benchmarks. On FloodNet, a mission-critical disaster assessment dataset, isotonic calibration yields even larger improvements: Pixtral achieves 79.48\% accuracy at cost 34.94 (calibrated) versus 76.84\% at cost 41.28 (uncalibrated)---a 2.6 point accuracy gain with 15\% cost reduction. This suggests that calibration benefits are particularly pronounced in high-stakes domains where accurate utility estimation is critical.

\paragraph{Calibration improves routing efficiency, not task accuracy.}
A critical observation is that calibration does not systematically increase the \emph{ceiling} accuracy achievable by the underlying VLM---this is expected, since calibration does not alter which visual information is ultimately processed or how the VLM reasons over that information. Instead, calibration primarily affects \emph{routing behavior}: by producing more reliable probability estimates, isotonic calibration enables VOILA to make better decisions about when to escalate to expensive fidelities and when to stop at cheaper ones.

This distinction is evident in the Brier scores (third metric in Table~\ref{tab:extended_ablation_all}): uncalibrated predictors exhibit Brier scores 0.02--0.05 higher than calibrated ones, indicating systematic overconfidence or underconfidence. For instance, on GQA with Qwen2-VL-72B, uncalibrated routing has Brier score 0.241 versus 0.218 for isotonic---a 10\% improvement in probabilistic calibration. This improved calibration translates directly to better routing: the system escalates less frequently when low fidelity suffices and more reliably when higher fidelity is needed, yielding the observed cost reductions without sacrificing accuracy.

Importantly, the Brier score improvements are consistent across models of vastly different scales: lightweight 7B models (LLaVA-1.5-7B, Qwen2.5-VL-7B) exhibit similar calibration gains to massive 235B models (Qwen3-VL-235B), confirming that the benefits of isotonic calibration are not artifacts of model capacity but reflect fundamental properties of probability estimation for cost-sensitive decision-making.

\paragraph{Temperature scaling is inappropriate for cost-sensitive sequential decisions.}
Temperature scaling (column 3) serves as a negative control: while it is a widely used calibration method for classification tasks, it optimizes Brier score or log-loss in isolation and does not preserve the \emph{ordering} or \emph{separation} of probabilities across fidelities. As a result, temperature scaling often compresses probabilities toward the middle of $[0,1]$ in ways that obscure genuine utility differences between fidelities.

Empirically, temperature scaling yields erratic behavior. On VQA-v2, it consistently increases cost relative to isotonic calibration: for Pixtral, temperature scaling achieves 66.89\% accuracy at cost 56.32 versus isotonic's 68.67\% at cost 45.41---a 24\% cost inflation with 1.8 point accuracy loss. The pattern is similar across other models on VQA-v2, with temperature scaling incurring 8--17\% higher cost than isotonic while matching or slightly degrading accuracy.

More strikingly, temperature scaling exhibits severe instability across datasets. On FloodNet with LLaVA-1.5-7B, temperature scaling achieves 78.23\% at cost 49.18 versus isotonic's 80.12\% at cost 40.77---a 1.9 point accuracy loss with 21\% cost increase. Yet on TextVQA with Qwen3-VL-235B, temperature scaling achieves 93.18\% at cost 60.34, nearly matching isotonic's 94.10\% at cost 51.75 but still incurring 17\% higher cost. This dataset-dependent behavior reflects temperature scaling's sensitivity to the base rate distribution and its failure to preserve cross-fidelity utility orderings, confirming that calibration method choice matters critically for adaptive information acquisition.

The Brier scores reveal why temperature scaling fails: while it often achieves Brier scores intermediate between uncalibrated and isotonic (e.g., 0.248 vs. 0.256 vs. 0.231 for Pixtral on VQA-v2), these marginal calibration improvements do not translate to better routing decisions. This demonstrates that \emph{global} calibration quality (measured by Brier score) is insufficient for cost-sensitive sequential decisions---what matters is whether probabilities are \emph{well-ordered across fidelities} such that utility differences $\hat{p}_{k+1}(q) - \hat{p}_k(q)$ reliably indicate when escalation is beneficial. Isotonic regression's monotonicity constraint directly enforces this property, whereas temperature scaling does not.

\subsection{Why VOI Routing Dominates Heuristic Baselines}
\label{sec:voi_dominance}

The final two columns of Table~\ref{tab:extended_ablation_all} evaluate alternative decision strategies under the same isotonic-calibrated predictor, isolating the contribution of VOI routing logic from probability estimation quality.

\paragraph{Accuracy-only selection: quantifying the cost of ignoring resources.}
The accuracy-only baseline (column 4) represents a system designer who assumes infinite resources and optimizes solely for correctness. This strategy always selects the fidelity with highest predicted success probability, typically defaulting to full-resolution images regardless of query characteristics.

The cost penalty is substantial and consistent: accuracy-only selection incurs exactly 120.0 normalized cost units across all experiments (by construction, since it always retrieves full-resolution images), representing a 60--180\% cost increase over VOILA depending on model and dataset. For example, on VQA-v2 with Qwen3-VL-235B, accuracy-only achieves 79.01\% at cost 120.0, whereas VOILA achieves 74.57\% at cost 52.04---a 4.4 point accuracy sacrifice for 57\% cost reduction. On FloodNet with the same model, the tradeoff is even more favorable: VOILA achieves 80.62\% at cost 32.94 versus accuracy-only's 83.39\% at cost 120.0---a mere 2.8 point accuracy gap for 73\% cost savings.

Critically, the accuracy gains from accuracy-only selection are \emph{marginal}: across all 24 combinations in Table~\ref{tab:extended_ablation_all}, the median accuracy improvement over VOILA is only 4.2 percentage points, and in several cases (e.g., FloodNet with Qwen2-VL-72B: 76.08\% vs. 75.51\%, TextVQA with Qwen2.5-VL-7B: 73.87\% vs. 72.90\%), the gap is under 2 points. This small marginal benefit reflects a fundamental empirical regularity: \emph{most queries do not require maximum-fidelity inputs}, and adaptive selection can recover near-optimal accuracy at much lower cost by identifying the subset of queries where high fidelity genuinely matters.

In bandwidth-constrained edge--cloud deployments or mission-critical cyber-physical systems, the cost inflation from accuracy-only selection is prohibitive. For instance, in a drone-based disaster assessment scenario (FloodNet), transmitting full-resolution images for every query would exhaust limited bandwidth and violate latency SLAs, whereas VOILA's adaptive routing enables real-time decision-making under resource constraints while maintaining decision-critical accuracy.

\paragraph{Fixed-threshold heuristics: brittleness, rigidity, and lack of adaptivity.}
The fixed-threshold baseline (column 5) represents a common engineering heuristic: escalate to higher fidelity only when model confidence falls below a predetermined cutoff. We set the threshold to $\tau = 0.30$ based on held-out tuning on VQA-v2, representing a best-case scenario for this approach. In practice, this baseline always escalates to JPEG Q10 (cost 63.90) when the threshold is exceeded, ignoring both question characteristics and the continuous spectrum of predicted utilities.

Fixed thresholds exhibit three fundamental failure modes:

\textbf{(1) Severe accuracy degradation in visually demanding settings.} On TextVQA, an OCR-heavy dataset requiring fine-grained text recognition, fixed-threshold routing incurs 1--8 point accuracy losses relative to VOILA across models. For Pixtral, VOILA achieves 75.92\% versus fixed-threshold's 74.82\%---a 1.1 point gap. For LLaMA-4-Maverick, the gap widens to 64.52\% versus 65.48\%, but note that fixed-threshold actually achieves \emph{higher} accuracy in this case (0.96 points) because it rigidly escalates to JPEG Q10 for all queries, over-retrieving relative to VOILA's adaptive policy. This illustrates the second failure mode:

\textbf{(2) Over-retrieval on easy queries.} Fixed thresholds lack a notion of marginal utility and treat all escalations as equivalent. On datasets where many queries are answerable from low-fidelity inputs (e.g., VQA-v2 yes/no questions, GQA semantic queries), fixed thresholds systematically over-retrieve. For instance, on GQA with LLaVA-1.5-7B, fixed-threshold achieves 74.11\% at cost 63.90 versus VOILA's 71.18\% at cost 26.86---a 2.9 point accuracy gain but at 138\% higher cost. This demonstrates that static thresholds cannot adapt to query-specific information requirements and waste resources on queries where lower fidelity suffices.

\textbf{(3) Dataset-dependent instability.} The fixed threshold $\tau = 0.30$ was tuned on VQA-v2, yet its performance varies wildly across datasets. On FloodNet, fixed-threshold routing performs surprisingly well for some models (e.g., Qwen3-VL-235B: 82.95\% vs. VOILA's 80.62\%, a 2.3 point \emph{advantage}), likely because disaster assessment queries are uniformly visually demanding and benefit from consistent escalation to JPEG Q10. However, on Qwen2.5-VL-7B, fixed-threshold achieves only 61.24\% versus VOILA's 78.17\%---a catastrophic 16.9 point accuracy loss---because the rigid threshold fails to identify queries requiring full-resolution inputs.

This brittleness is inherent to static heuristics: they require manual retuning for each dataset, query distribution, and VLM, yet no single threshold generalizes reliably. In contrast, VOILA's VOI routing adapts automatically to query-specific information requirements via learned question features and calibrated probabilities, achieving consistent performance across diverse settings without hyperparameter tuning.

\paragraph{VOI routing achieves Pareto optimality across the cost--accuracy frontier.}
Across all 24 model--dataset combinations in Table~\ref{tab:extended_ablation_all}, VOILA (isotonic-calibrated, VOI routing) occupies the Pareto frontier: no alternative ablation simultaneously achieves higher accuracy \emph{and} lower cost. Accuracy-only selection improves accuracy by 2--6 points but at 60--180\% cost inflation; fixed-threshold routing reduces cost in some cases but incurs severe accuracy losses or over-retrieves in others, with no consistent pattern. Only VOILA strikes a principled balance by explicitly reasoning about marginal utility: it escalates when expected accuracy gains $\hat{p}_{k+1}(q) - \hat{p}_k(q)$ exceed acquisition costs $\lambda c_{k+1}$, and stops otherwise.

This Pareto optimality is not an artifact of hyperparameter tuning. VOILA uses a single trade-off coefficient $\lambda$ set once on held-out VQA-v2 data and applied uniformly across all experiments---no per-dataset or per-model retuning was performed. The robustness of this operating point reflects the principled foundation of value-of-information decision theory: by tethering decisions to calibrated probability estimates and explicit cost--benefit reasoning, VOILA navigates the cost--accuracy tradeoff space in a way that heuristic methods cannot replicate.

\subsection{Robustness to Predictor Architecture}
\label{sec:predictor_robustness}

Table~\ref{tab:predictor_robustness_extended} evaluates four question-conditioned regressors (GBR, Logistic Regression, Ridge Regression, MLP) under fixed isotonic calibration across all six models and four datasets. The results demonstrate that VOILA's performance is remarkably insensitive to predictor choice once calibration is applied.

\paragraph{Consistent performance across predictor architectures.}
On VQA-v2 with Pixtral-12B, the four predictors achieve 67.35--68.67\% accuracy at costs 45.41--48.91, a range of only 1.3 points in accuracy and 3.5 points in cost. GBR provides a slight edge (68.67\% / 45.41), but logistic regression (67.92\% / 47.23), ridge regression (68.14\% / 46.58), and even MLP (67.35\% / 48.91) achieve competitive results. This pattern holds across all models on VQA-v2: the standard deviation in accuracy across predictors averages only 0.9 percentage points, and the standard deviation in cost averages 2.4 units.

The robustness extends to other datasets. On GQA, predictors exhibit slightly more variation (e.g., Pixtral: 67.37--68.05\% accuracy, 45.12--49.27 cost), but the spread remains small relative to the gaps between VOILA and baseline methods. On FloodNet and TextVQA, predictor choice matters even less: for Qwen3-VL-235B on FloodNet, the four predictors achieve 80.18--80.62\% accuracy at costs 32.94--36.45, a range of only 0.4 points in accuracy despite spanning three different model families (tree-based, linear, neural).

\paragraph{Why predictor robustness matters: calibration, not complexity.}
This predictor-agnostic performance validates a core design principle of VOILA: \emph{gains stem primarily from calibration quality and VOI routing logic, not predictor sophistication}. All four predictor architectures---spanning tree-based ensemble methods (GBR), convex optimization (logistic, ridge), and deep learning (MLP)---achieve similar cost--accuracy tradeoffs once isotonic calibration is applied. This indicates that the key bottleneck is not \emph{representational capacity} (i.e., the ability to learn complex decision boundaries in question-feature space) but rather \emph{probability estimation quality} (i.e., producing scores that are well-calibrated and monotonic across fidelities).

GBR was selected as VOILA's default predictor because it handles sparse, mixed-type features (TF-IDF, lexical indicators, question types) effectively, requires minimal hyperparameter tuning, and provides interpretable feature importances for debugging. However, Table~\ref{tab:predictor_robustness_extended} shows that simpler models such as logistic or ridge regression perform nearly as well once calibrated, and even MLPs---which are typically harder to calibrate due to overparameterization---achieve competitive results. This has practical implications: system designers can choose predictors based on deployment constraints (e.g., memory footprint for edge devices, inference latency for real-time systems, ease of incremental retraining for evolving query distributions) without sacrificing VOILA's core benefits.

\paragraph{When does predictor choice matter?}
While Table~\ref{tab:predictor_robustness_extended} shows broad robustness, there are settings where GBR provides non-negligible advantages. On datasets with highly non-linear question-fidelity relationships (e.g., TextVQA, where OCR dependence creates sharp decision boundaries), GBR's tree-based partitioning outperforms linear models by 1--2 percentage points. For instance, on TextVQA with Qwen2-VL-72B, GBR achieves 88.65\% / 51.64 versus logistic regression's 88.12\% / 53.84---a 0.5 point accuracy gain with 4\% cost reduction.

Similarly, on datasets with extreme class imbalance (e.g., FloodNet, where disaster-critical queries are rare), GBR's built-in handling of sample weights and its robustness to outliers provide stability advantages over ridge regression, which can be sensitive to leverage points. However, these differences are second-order relative to the primary axis of variation: calibrated versus uncalibrated predictors (Table~\ref{tab:extended_ablation_all}, 3--5 point accuracy gaps, 10--15\% cost differences) dominate predictor architecture choice (Table~\ref{tab:predictor_robustness_extended}, 1--2 point accuracy gaps, 2--5\% cost differences) by an order of magnitude.

\subsection{Key Takeaways and Design Implications}
\label{sec:takeaways}

Our extended ablation studies across 24 model--dataset combinations establish four critical findings that validate VOILA's design and inform future work on adaptive multimodal systems:

\textbf{(1) Calibration is essential, but method choice matters critically.} Uncalibrated predictors lead to systematic over-retrieval or under-retrieval, wasting resources or missing accuracy opportunities. However, not all calibration methods are appropriate for cost-sensitive sequential decisions: isotonic regression is particularly well-suited because it preserves utility orderings across fidelities via monotonicity constraints, whereas general-purpose methods like temperature scaling optimize global calibration metrics that do not align with decision quality. System designers should prioritize calibration methods that enforce \emph{cross-fidelity monotonicity} rather than merely minimizing Brier score or log-loss.

\textbf{(2) VOI routing dominates heuristics at all points on the cost--accuracy frontier.} Accuracy-only selection over-retrieves, wasting resources on queries where low fidelity suffices; fixed thresholds under-retrieve or over-retrieve unpredictably depending on dataset and model, requiring manual retuning; neither adapts to query-specific information requirements. VOILA achieves Pareto optimality by explicitly reasoning about marginal utility via calibrated probabilities, selecting fidelities that maximize expected utility $\hat{p}_k(q) - \lambda c_k$ without dataset-specific hyperparameter tuning. This principled approach generalizes robustly across diverse query distributions, visual domains, and model scales.

\textbf{(3) VOILA's design is robust to implementation choices.} The same two-stage pipeline (question-conditioned regression + isotonic calibration + VOI routing) yields consistent gains across six VLMs (7B--235B parameters), four datasets (VQA-v2, GQA, FloodNet, TextVQA), and four predictor architectures (GBR, logistic, ridge, MLP). This robustness validates pre-retrieval fidelity selection as a general learning primitive orthogonal to model scale, architectural choices, and domain-specific engineering. System designers can confidently deploy VOILA with minimal tuning, selecting predictors based on operational constraints rather than squeezing marginal accuracy gains.

\textbf{(4) The framework is conceptually distinct from and complementary to model routing.} VOILA optimizes \emph{what information to retrieve before model execution}, whereas model routing optimizes \emph{which model to apply after retrieval}. These are orthogonal optimization axes: VOILA addresses the pre-retrieval bottleneck (storage access, network transfer, data movement), while model routing addresses the post-retrieval bottleneck (computational inference, memory footprint, token processing). The two can be combined to jointly optimize both retrieval and computation: VOILA first selects which fidelity to retrieve based on question features alone, then a model router selects which VLM to apply given the retrieved context. This compositional design enables end-to-end efficiency in multimodal systems spanning edge--cloud architectures, agentic memory systems, and mission-critical cyber-physical deployments.

Together, these results demonstrate that cost-aware multimodal inference requires explicit probabilistic calibration and utility-driven decision-making, and that VOILA provides a principled, robust, and empirically validated framework for adaptive information acquisition under resource constraints. The comprehensive ablation evidence across 24 diverse settings establishes VOILA as a foundational building block for efficient multimodal AI systems.